\documentclass{article}

\usepackage{arxiv}

\usepackage[utf8]{inputenc} 
\usepackage[T1]{fontenc}    
\usepackage{hyperref}       
\usepackage{url}            
\usepackage{booktabs}       
\usepackage{amsfonts}       
\usepackage{nicefrac}       
\usepackage{microtype}      
\usepackage{lipsum}
\usepackage{graphicx}
\usepackage{xcolor}

\graphicspath{ {./images/} }

\usepackage{amsthm}
\usepackage{amssymb}

\usepackage{thmtools}
\usepackage{thm-restate}

\newtheorem{theorem}{Theorem}

\newtheorem{corollary}[theorem]{Corollary}
\newtheorem{assumption}{Assumption}

\newcommand{\1}{\mathbf{1}}

\usepackage{amsmath,amsfonts,bm}

\def\eqref#1{(\ref{#1})}

\def\1{\bm{1}}

\DeclareMathAlphabet{\mathsfit}{\encodingdefault}{\sfdefault}{m}{sl}
\SetMathAlphabet{\mathsfit}{bold}{\encodingdefault}{\sfdefault}{bx}{n}

\def\gB{{\mathcal{B}}}
\def\gC{{\mathcal{C}}}

\def\gG{{\mathcal{G}}}
\def\gH{{\mathcal{H}}}

\def\gN{{\mathcal{N}}}

\def\gP{{\mathcal{P}}}
\def\gQ{{\mathcal{Q}}}

\def\gS{{\mathcal{S}}}
\def\gT{{\mathcal{T}}}

\def\gV{{\mathcal{V}}}

\def\gY{{\mathcal{Y}}}

\newcommand{\E}{\mathbb{E}}

\newcommand{\R}{\mathbb{R}}

\DeclareMathOperator*{\argmax}{arg\,max}

\usepackage{algorithm}
\usepackage{algorithmic}

\usepackage{multirow}
\usepackage{makecell}
\usepackage{threeparttable}
\usepackage{siunitx}

\usepackage{caption}
\usepackage{subcaption}

\usepackage{csquotes}

\usepackage[most]{tcolorbox}

\usepackage{fancyvrb}
\usepackage{fvextra}
\newtcolorbox{prompt}[1][]{
floatplacement=ht,       
  float,                    
  enhanced,
  breakable,
  colback=black!5,      
  colframe=black!75,    
  fonttitle=\bfseries,  
  arc=4mm,
top=4mm,
bottom=4mm,
left=5mm,
right=5mm,
  title=#1
}

\title{Mitigating Over-Optimization in PRM-Guided Search in Mathematical Reasoning by Optimizing the Guide}

\author{
Taejong Joo \& Diego Klabjan \\
  Department of Industrial Engineering \& Management Sciences \\
  Northwestern University \\
  \texttt{\{taejong.joo,d.klabjan\}@northwestern.edu} 
}

\date{}

\begin{document}
\maketitle

\begin{abstract}
\noindent Process reward models (PRMs) provide dense step-level guidance for search-based reasoning, enabling inference-time compute to be allocated toward promising partial solutions. 
However, recent evidence suggests that PRM-guided search can over-optimize imperfect process rewards, pruning viable trajectories while expanding spurious ones. 
In this work, we theoretically show that directly leveraging PRM score is vulnerable to verifier noise through an extreme-value effect: non-viable prefixes become more likely to receive spuriously high scores as reasoning depth increase. 
Therefore, we formulate the PRM-guided search as a robust optimization problem over plausible reward perturbations, termed maximin PRM-guided search, leading to a training-free robust process supervision method that preserves promising alternatives when step-level scores are noisy. 
Maximin PRM-guided search mitigates this failure mode by reducing sensitivity to over-optimized PRM outliers. 
Without fine-tuning or online adaptation, maximin search consistently improves the PRM-guided search by 17–35\% on average, outperforming outcome- and step-level baselines in 14 out of 16 settings.
Our source code is available at \url{https://github.com/tjoo512/maximin-search}.
\end{abstract}

\section{Introduction}
Large language models (LLMs) perform strongly on tasks that benefit from explicit intermediate reasoning, such as mathematical problem solving and code generation, and chain-of-thought prompting is a simple and effective way to elicit this behavior \citep{wei2022chain, kojima2022large}. 
Importantly, performance on these tasks depends not only on the base model capability, but also on how much computation is allocated at inference time, including via sampling \citep{wang2022self}, search \citep{yao2023tree, besta2024graph}, or verification \citep{cobbe2021training}.
For example, Tree of Thoughts improves performance by explicitly searching over partial reasoning states and backtracking from unpromising ones rather than following a single linear chain of thoughts \citep{yao2023tree}.
This has motivated treating inference-time compute as a new scaling axis for reasoning models \citep{brown2024large, snell2024scaling}.

For search-based inference to improve reasoning, the system must determine which partial trajectories merit further computation \citep{yao2023tree, besta2024graph}. Process reward models (PRMs) provide a natural mechanism for this \citep{uesato2022solving, lightman2023let}.
Whereas outcome reward models score only final solutions, PRMs assign values to intermediate reasoning steps, yielding a denser supervision and control signal for multi-step search. 
This finer-grained guidance is especially useful when the search space is large and the compute budget is limited, since it enables early pruning of low-value trajectories \citep{feng2023alphazero, chen2024alphamath, wang2024math}. 
Accordingly, PRM-guided search has become an effective way to translate additional inference-time compute into gains in reasoning performance.

This efficiency, however, depends critically on the reliability of the reward estimates guiding search. A PRM is only a learned proxy for latent reasoning quality, and its predictions may degrade due to limited supervision, model misspecification, or distribution shift \citep{lightman2023let, casper2023open}. This issue is especially acute in search-based settings, because the search procedure itself determines which partial trajectories are encountered at inference time \citep{ross2011reduction, zhang2025lessons}. 
For instance, a false negative can permanently prune a promising branch, while a false positive can waste substantial budget on spurious trajectories. 
Consequently, PRM-guided search is vulnerable to \textit{over-optimization} or \textit{reward hacking}, where
optimizing an imperfect objective does not necessarily improve the true objective \citep{coste2023reward, gao2023scaling, skalse2022defining}.

A natural response is to improve the reward model online, for example through test-time training or related adaptation mechanisms \citep{sun2020test, shinn2023reflexion, hosseini2024v}. However, this elevates the computational and algorithmic burden rather than fundamentally resolving the vulnerability. Such approaches introduce additional optimization into the critical path of inference, increasing computational cost and implementation complexity \citep{snell2024scaling}. Furthermore, their stability relies on precisely the low-signal, out-of-distribution regime that already makes reward estimation difficult \citep{gao2023scaling,casper2023open}. 
More importantly, adaptation aims to repair the scorer itself, whereas the core decision problem is how to act when the scorer is inherently uncertain. From this perspective, the central challenge is not solely accurate reward learning, but robust decision-making under reward uncertainty \citep{iyengar2005robust,kumar2020conservative}.

In this work, we address PRM imperfection from that perspective (cf. Figure \ref{fig:illustration}). 
We formulate PRM-guided search as a robust optimization problem in which reward estimates are treated as uncertain rather than exact. 
The key idea is as follows: when reward estimates are unreliable, the search should reflect plausible reward perturbations. 
This formulation leads to inference procedures that are better aligned with the true objective, mitigating over-optimization of imperfect process rewards and yielding strong robustness to noisy reward estimates.
More broadly, our results suggest that scaling inference-time compute in PRM-guided search requires not only more search, but also better algorithmic robustness to the noisy score evaluations to make search effective in the first place.

Our contributions are as follows:
\begin{itemize}
    \item \textbf{A rigorous characterization of PRM over-optimization.}
    We formalize PRM-guided search as optimization with an imperfect step-level verifier and identify a concrete failure mechanism: maximization over noisy process scores can select non-viable prefixes even when viable continuations are present. 
    Our theoretical analysis quantifies how this risk depends on verifier uncertainty, oracle prefix separability, generator coverage, search width, and reasoning depth.
    \item \textbf{A principled robust formulation for PRM-guided search.}
    We recast step-level guidance as robust optimization under plausible verifier perturbations. 
    This yields a decision rule that avoids treating learned PRM scores as exact and instead preserves trajectories that remain competitive under reward uncertainty.
    \item \textbf{A training-free algorithm with strong empirical gains.}
    We instantiate the robust formulation as a simple plug-in procedure for PRM-guided decoding. 
    Without fine-tuning or online adaptation, maximin search consistently improves the PRM-guided search by 17–35\% on average, outperforming outcome- and step-level baselines in 14 out of 16 settings.
\end{itemize}

\begin{figure}
    \centering
    \includegraphics[width=0.98\linewidth]{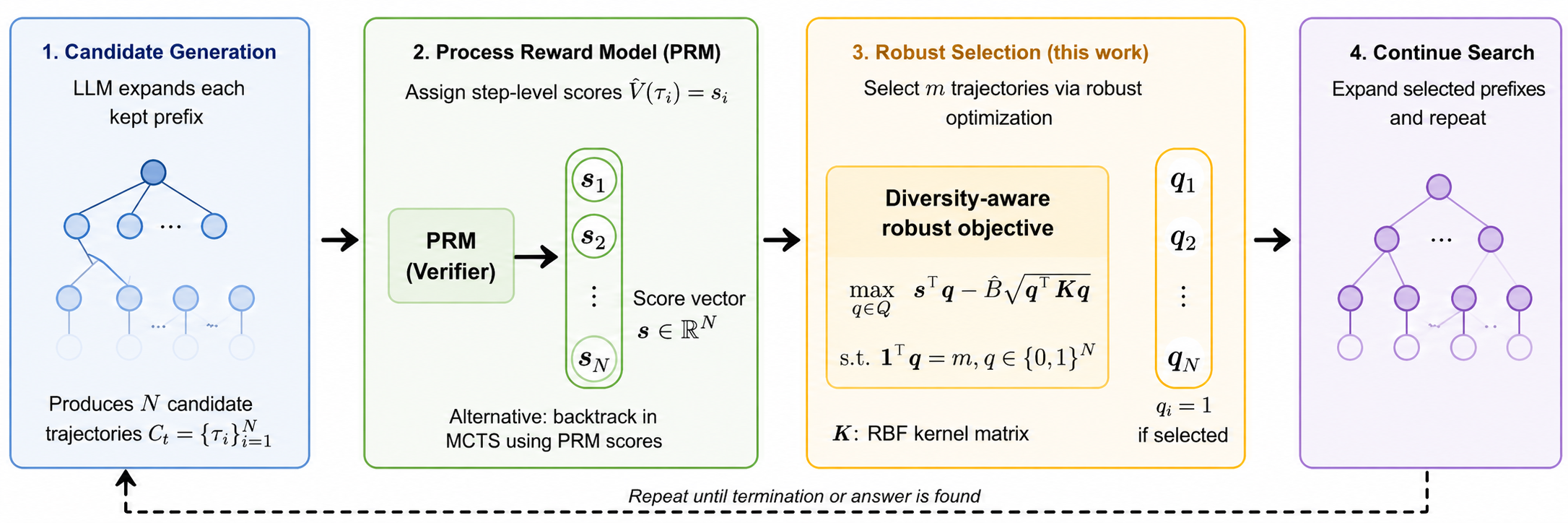}
    \caption{\textbf{Maximin PRM-guided search.} At each pruning step, the LLM first generates candidate reasoning trajectories, then a PRM assigns step-level scores. Instead of greedily keeping only the highest-scoring candidates, our robust selector keeps $m$ prefixes that balance high PRM scores with diversity under an RBF kernel $K$, making the search less sensitive to noisy PRM scores. The selected prefixes are then expanded, and the process repeats.
    }
    \label{fig:illustration}
\end{figure}

\section{Related Work} 
\paragraph{Step-level guidance with PRMs.}
A growing body of work uses step-level evaluation to improve mathematical reasoning at inference time, including self-evaluation-guided beam search \citep{xie2023self}, AlphaZero-style tree search \citep{feng2023alphazero}, step-level beam search \citep{chen2024alphamath}, and actor--critic style decoding \citep{dong2025enhancing}. 
Across these methods, a generator proposes candidate reasoning steps and a learned evaluator scores partial trajectories to guide search, improving performance by allocating compute more selectively across candidate chains.

Our work is most closely related to recent \emph{self-improvement} approaches that address failures from reward misspecification. ReST-MCTS* \citep{zhang2024rest} combines PRM-guided search with iterative updates to the policy and value model; V-STaR \citep{hosseini2024v} similarly improves  output verification through iterative refinement; and PRIME \citep{cui2025process} mitigates reward hacking by updating rewards from rollout outcomes. These methods reduce the mismatch between search and evaluation by adapting the policy, verifier, or reward model to deployment-induced trajectories. However, they rely on additional training, often incur substantial computational overhead, and in some cases require outcome supervision or ground-truth answers \citep{zhang2024rest, hosseini2024v, cui2025process}.

By contrast, we study over-optimization in PRM-guided search from a robust optimization perspective at inference time. Rather than mitigating reward misspecification through retraining or test-time adaptation, we seek search procedures that remain less sensitive to the PRM error under a fixed generator and fixed PRM. To our knowledge, prior work has largely overlooked this robust-inference perspective on PRM-guided mathematical reasoning.

\paragraph{Mathematical characterization of over-optimization.}
Over-optimization under imperfect reward models has been studied primarily in \emph{outcome-level} inference-time optimization, especially Best-of-N (BoN) decoding and related reranking methods. A growing literature shows that while BoN can improve performance initially, increasing optimization pressure against a misspecified reward model eventually induces reward hacking and degrades the true objective \citep{jinnai2025regularized, ichihara2025evaluation, khalaf2025inference}. Recent theory formalizes this rise-then-fall behavior and characterizes the limitations of standard BoN under reward misspecification \citep{khalaf2025inference, huang2025best}. Together, these results establish over-optimization as a fundamental pathology of inference-time optimization in the simpler outcome-level setting.

By contrast, the step-level setting remains much less understood theoretically, despite growing evidence that PRM-guided search can fail in practice. Recent studies show that PRMs can be unreliable evaluators of partial reasoning states, with errors that compound over long horizons and under distribution shift \citep{zhang2025lessons, khalifa2025process, cinquin2025limits}.
\citet{cinquin2025limits} further show that PRM-guided tree search can underperform simpler outcome-level baselines such as BoN due to inaccurate step-value estimates, with the gap increasing as search depth increases and under out-of-distribution shifts. 
These findings suggest that over-optimization may be more severe in step-level guidance, where reward errors shape not only final selection but also the allocation of search compute across intermediate trajectories.

Our work builds on this literature by providing, to the best of our knowledge, the first mathematical treatment of over-optimization in \emph{PRM-guided step-level search}. Whereas prior theory focuses on outcome-level reranking and prior PRM work mainly focuses on empirical failures, we formalize how PRM misspecification interacts with sequential search and pruning decisions.

\section{Background} \label{sec:background}

\paragraph{Reasoning Models}
We consider a reasoning model that generates a structured output for an input problem $x$ whose ground-truth answer is $y$. 
In mathematical reasoning, this output typically consists of a sequence of intermediate reasoning steps followed by a final answer. Throughout the paper, we analyze inference on a single input $x$ and suppress the dependence on $x$ in the notation when it is clear from context.
Let $s \in \mathcal{S} \triangleq \gV^*$ denote a \emph{thought} as the Kleene closure of a set of tokens $\gV$.
Typically, it corresponds to a semantically coherent reasoning segment separated from adjacent segments by a fixed delimiter ``\texttt{\textbackslash n\textbackslash n}'' \citep{lightman2023let,zhang2025lessons,cinquin2025limits}. A \emph{reasoning trajectory} of length $t$ is an ordered sequence of thoughts $\tau^{(t)} = (x, s_1, \dots, s_t),$ with $s_i \in \gS$ for $i \in [t] \triangleq \{ 1,2, \cdots, t\}$.
We let \(T_{\max}\) denote a maximum reasoning horizon and let $\mathsf T = \inf\{t\le T_{\max}: s_t=\texttt{EOS}\}$ denote the random termination time, with the convention \(\mathsf T=T_{\max}\) if no
\texttt{EOS} token is generated earlier. The distribution of \(\mathsf T\) is induced by the generator and the search procedure, and may depend on the search history. For notational simplicity, we pad positions after \(\mathsf T\) with \texttt{EOS}, so that all trajectories can be represented on the common horizon \(T_{\max}\).
When we consider multiple trajectories with the same length, we use $\tau_i^{(t)}$ to denote the $i$-th trajectory of length $t$ for $i \in [N]$ with $N$ being the maximum number of trajectories. 
For a prefix trajectory $\tau^{(t)} = (x, s_1,\dots,s_t)$ and a new thought $s \in \mathcal{S}$, we write $\tau^{(t)} \oplus s \triangleq (x, s_1,\dots,s_t,s)$ as a prefix extension. Also, a trajectory $\tau^{(t)}$ is a prefix of $\tau^{(t')}$, written $\tau^{(t)} \preceq \tau^{(t')}$, if the first $t$ thoughts of $\tau^{(t')}$ equal $\tau^{(t)}$.

The base reasoning model is specified by a thought-level policy $\pi$, which defines a conditional distribution over the next thought given a partial trajectory: $\pi(s \mid \tau^{(t)}) \in \Delta(\mathcal{S}),$
where $\Delta(\mathcal{S})$ denotes the probability simplex over $\mathcal{S}$. This induces a distribution over complete trajectories by autoregressive factorization, $\pi(\tau^{(T)}) = \prod_{t=1}^{T} \pi(s_t \mid \tau^{(t-1)}),$ with $\tau^{(0)} \triangleq x$. When a trajectory is terminated, a deterministic answer extraction map $\mathrm{ans}: \bigcup_{t=0}^{T_{\max}} \mathcal{S}^t \to \mathcal{Y}$ returns the final output $\hat{y} = \mathrm{ans}(\tau)$ in answer space $\mathcal{Y}$.

\paragraph{Step-level guidance}
To guide search over partial trajectories, we consider a process reward model (PRM) $\widehat{V}$ that assigns a scalar value to each intermediate trajectory, $\widehat{V}(\tau^{(t)}) \in \mathbb{R},$ interpreted as an estimate of the future utility of continuing from $\tau^{(t)}$.
We let the oracle value function $V(\tau^{(t)})$ output a utility of $\tau^{(t)}$, such as the logit of the probability of yielding the ground-truth answer \citep{zhang2025lessons} or whether all intermediate thoughts in $\tau^{(t)}$ are correct \citep{uesato2022solving}.
We assume $V$ is bounded as $\underline V_t\le V(\tau^{(t)})\le \overline V_t$ for all $\tau^{(t)}\in\mathcal S^t$.

As a concrete instance, consider step-level beam search which is the focus of our theoretical analysis as in \citet{xie2023self}. At depth $t$, let $\mathcal{B}_t = \{\tau_i^{(t)}\}_{i=1}^{w_t}$ denote the active beam, where $w_t$ is the beam width. 
Each trajectory $\tau_i^{(t)} \in \mathcal{B}_t$ is expanded by sampling or enumerating candidate next thoughts from the base generator $\pi(\cdot \mid \tau_i^{(t)})$. 
Writing $\mathcal{C}_{t+1} = \bigcup_{i=1}^{w_t} \left\{ \tau_i^{(t)} \oplus s : s \sim \pi(\cdot \mid \tau_i^{(t)}) \right\}$ for the resulting set of candidate continuations, the PRM $\hat{V}$ is used to score each candidate prefix $\tau \in \mathcal{C}_{t+1}$.
The size of $\gC_{t+1}$, $|\gC_{t+1}| \le N$, is the total number of trajectories at step $t$ and $b_t = \lfloor N / w_t \rfloor$ is the branching factor at $t$. We note that the procedure reduces to outcome-level selection (Best-of-N) when $b_t = 1$ and reduces to the step-level greedy decoding when $w_t = 1$.
The next beam is then formed by selecting the top-$w_{t+1}$ trajectories under this score (i.e., $\mathcal{B}_{t+1} \subseteq \mathcal{C}_{t+1}$ with $|\mathcal{B}_{t+1}| = w_{t+1}$) and then repeating the procedure until termination (i.e., collecting $N$ answers).
Thus, $\pi$ determines how candidate reasoning steps are proposed, while $\widehat{V}$ determines how computation is allocated across them.

This formulation captures the essential role of step-level guidance: the search procedure repeatedly expands partial trajectories using the generator and uses the PRM to rank, retain, or prune them before they terminate. Beam search provides the simplest instantiation of this principle, but the same decomposition extends to more general search procedures, such as greedy best-first search \citep{cinquin2025limits} and Monte Carlo tree search (MCTS) \citep{feng2023alphazero,chen2024alphamath}, where $\widehat{V}$ can analogously guide node selection, expansion, or backup. 
However, they introduce substantial additional machinery and cost: in LLM reasoning, each expansion requires generating open-ended natural-language continuations rather than applying a cheap discrete transition. Our goal is not to study how to obtain better reward estimates online, but to understand how to search reliably given a fixed PRM whose estimates may be accurate or inaccurate.

\section{Closer look at an over-optimization in PRM-guided search} \label{sec:bad_step_guide}

The goal of PRM-guided search can be understood as maximizing the average score of the selected trajectories. 
Formally, it can be represented as solving the optimization problem of
\begin{equation} \label{eq:goal_prm}
    \gB_t \in 
        \argmax_{\gB'_t \subseteq \gC_t : | \gB'_t | = w_t} \hat{\E}_{\tau \sim \gB'_t} [V(\tau)] \triangleq 
        \argmax_{\gB'_t \subseteq \gC_t : | \gB'_t | = w_t}  \frac{1}{|\gB'_t|} \sum_{\tau \in \gB'_t} V(\tau),
\end{equation}
where we use the notation $\hat{\E}$ to denote the average score in $\gB'_t$.

Ideally, PRM-guided search is expected to filter unpromising trajectories in $\gC_t$ and therefore compensate an imperfect generator quality, attaining $\hat{\E}_{\tau \sim \gB_t} [V(\tau)] \gg \hat{\E}_{\tau \sim \gC_t} [V(\tau)]$. 
However, in practice, using off-the-shelves PRM for the step-level guidance is often subject to the reward over-optimization problem, resulting in inferior performances than an outcome-level decoding method such as best-of-N   \citep{cinquin2025limits, zhang2025lessons, hu2025prm}. 
In this section, we closely examine the failure mode of PRM-guided decoding by quantifying the error in \eqref{eq:goal_prm} under an imperfect verifier.  
Proofs of all formal claims in this paper are provided in Appendix \ref{appx:proof}.

\paragraph{Setup}
We consider the beam-style search over partial solutions as explain in \S \ref{sec:background}.
Let $\gT^\star \subseteq \gS^{T}$ denote the set of successful full trajectories such that $\mathrm{ans}(\tau^\star) = y$ for all $\tau^\star \in \gT^\star$. 
We define the viable-prefix set as $\widetilde{\mathcal{P}}_t^\star = \{ \tau^{(t)} \in \gS^t : \exists \tau^\star \in \gT^\star \text{ such that } \tau^{(t)} \preceq \tau^\star  \}, $ which contains all prefixes of $t$ thoughts that can still reach the correct answer, and the non-viable-prefix set $\widetilde{\mathcal{P}}_t^- \triangleq \gS^{t} \setminus \widetilde{\mathcal{P}}_t^\star$. 
For the candidate set $\gC_t$ with $|\gC_t| \le N$, we let $\gC_t^\star = \gC_t\cap \widetilde{\gP}_t^\star$ and $\gC_t^- = \gC_t\cap \widetilde{\gP}_t^-$. Also, we define $N_t^\star = |\gC_t^\star|$ and $N_t^- = |\gC_t^-|$ be the number of viable and non-viable prefixes, respectively.

Under this setting, we pose the following regularity conditions for the learned PRM $\hat{V}$ as an estimator of the oracle value function $v$.

\begin{assumption}[Verifier regularity]\label{ass:noise}
The learned verifier satisfies $\widehat V(\tau^{(t)}) = V(\tau^{(t)})+\beta_t(\tau^{(t)})+\varepsilon_t(\tau^{(t)})$ for all $t \in [T_{\max}]$, where $\beta_t:\mathcal S^t\to\mathbb R$ is a deterministic bias function satisfying $|\beta_t(\tau^{(t)})|\le B_t \text{ for all }  \tau^{(t)}\in\mathcal S^t .$
Conditional on the search history $\mathcal H_t$ and the realized candidate set $\mathcal C_t$, the noise variables $\{\varepsilon_t(\tau):\tau\in\mathcal C_t\}$ are independent and identically distributed as $\mathcal N(0,\sigma_t^2).$
Further, we assume $B_1 \ge \dots \ge B_{T_{\max}}$ and $\sigma_1 \ge \dots \ge \sigma_{T_{\max}}$.
\end{assumption}

Regularity conditions in Assumption \ref{ass:noise} mathematically formalizes the observations that early-stage evaluations are highly uncertain because the final answer is distant, whereas later steps offer much clearer outcome signals \citep{hu2025prm, cinquin2025limits}.
The bounded bias condition assumes a uniform error bound for the trajectories, which can be attained under a sufficiently strong PRM. Note that this can be replaced by a probabilistic argument, which makes our analyses probabilistic as well, but we focus on the uniform bound for simplicity.
The Gaussian noise assumption is adopted for analytical tractability and to obtain closed-form high-dimensional risk expressions, which is commonly used in high-dimensional interpolation and double-descent analyses \citep{advani2020high,belkin2020two,nakkiran2019more}. 
Finally, we remark that Assumption~\ref{ass:noise} is used only for the Gaussian extreme-value analysis. It is not intended to model all verifier errors. In particular, systematic verifier misspecification may be correlated across semantically similar prefixes; this correlated component is addressed separately in \S~\ref{subsec:kernel_persp}.

\subsection{Over-optimization under PRM-guided search}
We first isolate the pure reward-maximization regime, namely the greedy setting $w_t = 1$ for $t \in [T_{\max}]$. 
As we discuss at the end of \S\ref{subsec:error_accumulation}, enlarging the beam width raises the number of verifier errors that must coincide to derail the search from one to $w_t$, which mitigates but does not remove the failure. 
We therefore develop the mechanism for the over-optimization analysis in the simplest $w_t=1$ case and recover the general beam-search statement as a direct corollary.
To avoid degenerate cases, we assume throughout that $\gC_t^\star \neq \emptyset$ and $\gC_t^- \neq \emptyset$; equivalently, $\gC_t$ contains at least one viable prefix and at least one non-viable prefix.
At step $t$, the greedy PRM-guided search selects the trajectory with the maximum estimated score; that is,
\begin{equation} \label{eq:greedy_selection}
    \widehat{\tau}^{(t)}_{\mathrm{greedy}}
\in
\arg\max_{\tau^{(t)} \in \gC_t} \widehat V(\tau^{(t)}), 
\end{equation}
which is a solution to \eqref{eq:goal_prm} when $\hat{V}(\tau) = V(\tau)$ for all $\tau \in \gC_t$.

To study the intrinsic difficulty of the PRM-guided search, we analyze the lower bound of the failure mode of $\widehat{\tau}^{(t)}_{\mathrm{greedy}}$. 
To this end, we define the set of non-viable prefixes that outrank \emph{all} viable prefixes under the learned verifier as
\begin{equation}
    K_t
    \triangleq \left\{ \tau_- \in \gC_t^- : \widehat V(\tau_-)
    > \max_{\tau_\star \in \gC_t^\star} \widehat V(\tau_\star)
    \right\}.
\end{equation}
Then, the greedy search selects a non-viable prefix (i.e., $\{\widehat\tau_{\mathrm{greedy}}^{(t)} \notin \widetilde{\gP}_t^\star  | 
\gC_t\}$) if and only if $|K_t|\ge 1$. We establish the lower bound of this event in the following.

\begin{restatable}{theorem}{greedyhacking}{\normalfont {(Greedy reward hacking via deterministic cutoff).}}
\label{thm:greedy_hacking_cutoff}
Under Assumption \ref{ass:noise} and any fixed $\lambda\in \R$, let $\Gamma_t(\gC_t) = \sup_{\tau_\star\in\gC_t^\star, \tau_-\in\gC_t^-} \{V(\tau_\star)+\beta_t(\tau_\star) -V(\tau_-)-\beta_t(\tau_-) \}$, $q_t(\lambda)= \Phi\left(\frac{\lambda}{\sigma_t}\right)^{N_t^\star}$, and $r_t(\lambda) = 1-\Phi\left( \frac{\Gamma_t (\gC_t)+\lambda}{\sigma_t} \right)$.
Then, it holds that
\begin{equation} \label{eq:greedy_bound}
\Pr\left( \widehat\tau_{\mathrm{greedy}}^{(t)} \notin \widetilde{\gP}_t^\star  \middle| 
\gC_t \right)
    \ge q_t(\lambda) \left[ 1-\left(1-r_t(\lambda)\right)^{N_t^-} \right] 
    \triangleq \ell_t(\lambda;\gC_t).
\end{equation}
Consequently, $\Pr\left( \widehat\tau_{\mathrm{greedy}}^{(t)} \notin \widetilde{\gP}_t^\star  \middle| 
\gC_t \right) \geq \sup_{\lambda \in \R} \ell_t(\lambda;\gC_t)$. 
\end{restatable}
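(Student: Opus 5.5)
The plan is to exhibit, for each fixed $\lambda\in\R$, an explicit event that forces greedy selection to pick a non-viable prefix and whose conditional probability equals $\ell_t(\lambda;\gC_t)$. I would work conditionally on $(\mathcal H_t,\gC_t)$. Given these, $V$ and $\beta_t$ are fixed numbers on the finite set $\gC_t$, and by Assumption~\ref{ass:noise} the noises $\{\varepsilon_t(\tau)\}_{\tau\in\gC_t}$ are i.i.d.\ $\mathcal N(0,\sigma_t^2)$. Because $V$ and $\beta_t$ are bounded and $\gC_t$ is finite with $\gC_t^\star,\gC_t^-\neq\emptyset$, the quantity $\Gamma_t(\gC_t)$ is a finite maximum. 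The key observation is that, by definition of $\Gamma_t$ as a supremum over pairs, every $\tau_\star\in\gC_t^\star$ and $\tau_-\in\gC_t^-$ satisfy
\begin{equation*}
V(\tau_\star)+\beta_t(\tau_\star)\le V(\tau_-)+\beta_t(\tau_-)+\Gamma_t(\gC_t).
\end{equation*}

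Next I define two events:
\begin{equation*}
A_\lambda=\{\varepsilon_t(\tau_\star)\le\lambda\ \ \forall \tau_\star\in\gC_t^\star\},
\end{equation*}
\begin{equation*}
B_\lambda=\{\exists\,\tau_-\in\gC_t^-:\ \varepsilon_t(\tau_-)>\Gamma_t(\gC_t)+\lambda\}.
\end{equation*}
On $A_\lambda\cap B_\lambda$, pick a $\tau_-$ witnessing $B_\lambda$. Then for every viable $\tau_\star$,
\begin{equation*}
\widehat V(\tau_-)>V(\tau_-)+\beta_t(\tau_-)+\Gamma_t(\gC_t)+\lambda\ge V(\tau_\star)+\beta_t(\tau_\star)+\lambda\ge \widehat V(\tau_\star).
\end{equation*}
So $\tau_-\in K_t$ and $|K_t|\ge1$. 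Since the inequality is strict, the maximizer in \eqref{eq:greedy_selection} is non-viable under any tie-breaking rule. This gives
\begin{equation*}
\Pr(\widehat\tau^{(t)}_{\mathrm{greedy}}\notin\widetilde\gP_t^\star\mid \mathcal H_t,\gC_t)\ge \Pr(A_\lambda\cap B_\lambda\mid\mathcal H_t,\gC_t).
\end{equation*}

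To compute the right-hand side, note that $A_\lambda$ and $B_\lambda$ depend on noise indexed by the disjoint sets $\gC_t^\star$ and $\gC_t^-$, so they are conditionally independent. By the i.i.d.\ Gaussian structure, $\Pr(A_\lambda)=\Phi(\lambda/\sigma_t)^{N_t^\star}=q_t(\lambda)$. Likewise $\Pr(B_\lambda)=1-\Pr(\text{all } \varepsilon_t(\tau_-)\le \Gamma_t(\gC_t)+\lambda)=1-(1-r_t(\lambda))^{N_t^-}$. The product is exactly $\ell_t(\lambda;\gC_t)$.

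This bound depends on the conditioning only through $\gC_t$, via $N_t^\star$, $N_t^-$ and $\Gamma_t(\gC_t)$. Taking the conditional expectation over $\mathcal H_t$ given $\gC_t$ (tower property) therefore yields \eqref{eq:greedy_bound}. Since $\lambda$ was arbitrary, taking the supremum over $\lambda\in\R$ gives the final claim.

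I expect no real obstacle; the proof is essentially a single event inclusion. The only points needing care are the following:
\begin{itemize}
\item $\Gamma_t$ must be used as an upper bound uniformly over all viable/non-viable pairs, which is what makes one outlier non-viable prefix beat every viable one simultaneously.
\item Strict inequality is needed so that ties in the $\arg\max$ do not matter.
\item Conditioning on $\mathcal H_t$ must be handled correctly, since the i.i.d.\ assumption is stated conditionally on the history and candidate set.
\end{itemize}
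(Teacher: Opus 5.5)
Your proposal is correct and follows essentially the same route as the paper. The paper fixes the cutoff $a_t=\max_{\tau_\star\in\gC_t^\star}\{V(\tau_\star)+\beta_t(\tau_\star)\}+\lambda$, lower-bounds the failure probability by $\Pr\bigl(\max_{\tau_\star\in\gC_t^\star}\widehat V(\tau_\star)\le a_t,\ \max_{\tau_-\in\gC_t^-}\widehat V(\tau_-)>a_t \mid \gC_t\bigr)$, factors this by independence of the viable and non-viable noises, and bounds each factor through exactly your events $A_\lambda$ and $B_\lambda$, using $a_t-\min_{\tau_-\in\gC_t^-}\{V(\tau_-)+\beta_t(\tau_-)\}=\Gamma_t(\gC_t)+\lambda$; your version skips the intermediate cutoff and is slightly more explicit about strict inequality and the conditioning on $\mathcal H_t$, but the argument is the same.
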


Theorem~\ref{thm:greedy_hacking_cutoff} provides a certified failure mechanism for greedy PRM-guided search, being analogous to over-optimization in Best-of-N sampling \citep{gao2023scaling}. 
In the proof, we fix a normalized cutoff level and analyze the following event: every viable prefix stays below the cutoff, while at least one non-viable prefix exceeds it. 
On this event, the maximizer of the learned verifier must be non-viable. 
The lower bound
\begin{equation} \label{eq:lb_def}
    \ell_t(\lambda;\gC_t)     = \underbrace{\Phi \left(\frac{\lambda}{\sigma_t}\right)^{N_t^\star}}_{\text{viable-frontier term}} \cdot     \underbrace{\left[    1- \Phi \left( \frac{\Gamma_t(\gC_t)+\lambda}{\sigma_t}     \right)^{N_t^-}     \right]}_{\text{non-viable outlier term}}
\end{equation}
therefore exposes two coupled effects. 
The first term controls whether the viable frontier among $\gC_t^\star$ produces an unusually high verifier realization; the second term controls whether the much larger non-viable population produces an extreme upward realization. 
Thus, the failure is driven by the upper tail of the verifier error after maximization over $\gC_t^-$.

This perspective highlights a basic limitation of using the verifier to compensate for a weak generator. 
A weak generator simultaneously decreases $N_t^\star$ and increases $N_t^-$. 
The former reduces the number of viable prefixes that can defend the beam by producing a high score, while the latter increases the number of non-viable prefixes that can win through an extreme positive error. 
Consequently, the same verifier noise is much more dangerous when the candidate set is dominated by non-viable prefixes under a weak generator.

\paragraph{On the effective gap and the verifer noise.}
In \eqref{eq:greedy_bound}, the quantity $\Gamma_t(\gC_t)$ is a conservative proof envelope: it upper bounds the deterministic advantage that viable prefixes may have over non-viable prefixes in the cutoff argument, being a sufficient separation scale. 
When this scale is large, a non-viable prefix must overcome a large deterministic deficit before it can beat the viable frontier. 
When it is small, viable and non-viable prefixes are less indistinguishable even before stochastic noise is added, and the greedy decision becomes intrinsically fragile.

The cutoff $\lambda$ in Theorem~\ref{thm:greedy_hacking_cutoff} is an auxiliary proof parameter, so the clean way to interpret noise is through the optimized certificate. 
For fixed $\sigma_t$, we can reparameterize $\lambda=a\sigma_t$. 
Then, we get $\sup_{\lambda\in\mathbb R}\ell_t(\lambda;\gC_t)  =     \sup_{a\in\mathbb R} \Phi(a)^{N_t^\star} \left[ 1- \Phi \left( a+\frac{\Gamma_t(\gC_t)}{\sigma_t} \right)^{N_t^-}  \right].$
Thus, the verifier noise enters through the normalized gap $\gamma_t=\frac{\Gamma_t(\gC_t)}{\sigma_t}.$
For a larger $\sigma_t$, the same deterministic separation becomes less protective because $\gamma_t$ shrinks. 
Hence, the optimized lower bound worsens as the effective signal-to-noise ratio $\gamma_t$ decreases.

\subsection{Error accumulation across reasoning depth} \label{subsec:error_accumulation}
The preceding analysis conditions on a fixed candidate set $\gC_t$ and isolates a one-step verifier-selection failure. 
We now incorporate the generator, which determines the distribution of $\gC_t$, and extend the single-step failure mode across the reasoning horizon.

\begin{restatable}{theorem}{generatoravg}{\normalfont {(Generator-averaged greedy failure).}}\label{thm:generator_averaged_greedy_failure}
Let us suppose the current greedy prefix at depth $t-1$ is viable, i.e., $\widehat \tau_{\mathrm{greedy}}^{(t-1)} \in \widetilde{\mathcal P}_{t-1}^\star.$
Let $\mathcal H_t$ denote the search history before generating candidates at depth $t$. 
Let us also assume a Markovian candidate generator: conditional on $\mathcal H_t$, the $N_t$ candidate continuations are sampled independently, and each continuation remains viable with probability $p_t(\mathcal H_t) = \Pr_{s\sim \pi(\cdot\mid \widehat \tau_{\mathrm{greedy}}^{(t-1)})} \left( \widehat \tau_{\mathrm{greedy}}^{(t-1)}\oplus s \in \widetilde{\mathcal P}_t^\star \;\middle|\; \mathcal H_t \right).$
Finally, assume that for every mixed candidate set with $\mathcal C_t^\star\neq\emptyset$ and $\mathcal C_t^-\neq\emptyset$, there exists $\mathcal H_t$-measurable $\overline \Gamma_t(\mathcal H_t)$ such that $\Gamma_t(\mathcal C_t)\le \overline \Gamma_t(\mathcal H_t).$
Given Assumption \ref{ass:noise} and any fixed $\lambda\in\mathbb R$, we define $a_t(\lambda) = \Phi\left(\frac{\lambda}{\sigma_t}\right)$ and $b_t(\lambda;\mathcal H_t) = \Phi\left(\frac{\overline \Gamma_t(\mathcal H_t)+\lambda}{\sigma_t}\right).$
Then, it holds that 
\begin{equation}
    \Pr\left(\widehat \tau_{\mathrm{greedy}}^{(t)} \notin \widetilde{\mathcal P}_t^\star \;\middle|\; \mathcal H_t \right) \ge \delta_t(\lambda;\mathcal H_t),
\end{equation}
where $\delta_t(\lambda;\mathcal H_t) = \left(1-p_t(\mathcal H_t)\right)^{N_t} + \sum_{k=1}^{N_t} {N_t\choose k} p_t(\mathcal H_t)^k \left(1-p_t(\mathcal H_t)\right)^{N_t-k} a_t(\lambda)^k \left[ 1-b_t(\lambda;\mathcal H_t)^{N_t-k} \right].
$
\end{restatable}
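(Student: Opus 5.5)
We condition on the number of viable candidates $N_t^\star=k$ and average the single-step bound of Theorem~\ref{thm:greedy_hacking_cutoff} over the generator. Under the Markovian generator assumption, conditional on $\mathcal H_t$ the $N_t$ continuations are independent and each is viable with probability $p_t(\mathcal H_t)$. Hence $N_t^\star\mid\mathcal H_t\sim\mathrm{Bin}(N_t,p_t(\mathcal H_t))$ and $N_t^-=N_t-N_t^\star$. We then write
\begin{equation*}
\Pr\left(\widehat\tau^{(t)}_{\mathrm{greedy}}\notin\widetilde{\mathcal P}_t^\star \,\middle|\, \mathcal H_t\right)=\sum_{k=0}^{N_t}\Pr\left(N_t^\star=k\,\middle|\,\mathcal H_t\right)\,\E\left[\Pr\left(\widehat\tau^{(t)}_{\mathrm{greedy}}\notin\widetilde{\mathcal P}_t^\star\,\middle|\,\mathcal C_t\right)\,\middle|\,\mathcal H_t, N_t^\star=k\right],
\end{equation*}
using the tower property. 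The inner probability given $\mathcal C_t$ is taken over the verifier noise, which Assumption~\ref{ass:noise} makes i.i.d.\ Gaussian conditional on $(\mathcal H_t,\mathcal C_t)$.

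We bound the terms case by case. For $k=0$, every candidate is non-viable, so the greedy choice is non-viable with probability one. This gives the term $(1-p_t)^{N_t}$. For $k=N_t$, the bracket $1-b_t^{0}$ vanishes, so the term contributes zero and trivially lower-bounds a nonnegative probability. This also handles the fact that Theorem~\ref{thm:greedy_hacking_cutoff} assumed both classes nonempty. For $1\le k\le N_t-1$, the candidate set is mixed, so Theorem~\ref{thm:greedy_hacking_cutoff} applies for every realization of $\mathcal C_t$ with $N_t^\star=k$, giving the lower bound $\Phi(\lambda/\sigma_t)^{k}[1-\Phi((\Gamma_t(\mathcal C_t)+\lambda)/\sigma_t)^{N_t-k}]$.

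The key monotonicity step comes next. By hypothesis, $\Gamma_t(\mathcal C_t)\le\overline\Gamma_t(\mathcal H_t)$ on every mixed set. Since $\Phi$ is increasing, $\Phi((\Gamma_t+\lambda)/\sigma_t)\le b_t(\lambda;\mathcal H_t)$, and hence $1-\Phi(\cdot)^{N_t-k}\ge 1-b_t^{N_t-k}$. The resulting bound $a_t(\lambda)^k[1-b_t(\lambda;\mathcal H_t)^{N_t-k}]$ is $\mathcal H_t$-measurable and depends on $\mathcal C_t$ only through $k$. It therefore passes out of the conditional expectation unchanged. Multiplying by the binomial weights $\binom{N_t}{k}p_t^k(1-p_t)^{N_t-k}$ and summing yields $\delta_t(\lambda;\mathcal H_t)$.

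The main obstacle is measure-theoretic bookkeeping rather than any estimate. First, we must justify that Theorem~\ref{thm:greedy_hacking_cutoff}, which is stated conditional on $\mathcal C_t$, holds conditionally on $(\mathcal H_t,\mathcal C_t)$; Assumption~\ref{ass:noise} is phrased exactly in these terms. Second, we must ensure that $\lambda$ is fixed and does not depend on $\mathcal C_t$, so that $a_t$ and $b_t$ are indeed $\mathcal H_t$-measurable. Third, we must confirm that $N_t$ and $\sigma_t$ are determined by $\mathcal H_t$, so that the binomial law is exact. I would state these points explicitly before the summation.
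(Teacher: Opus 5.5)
Your proposal is correct and follows essentially the same route as the paper's proof: condition on $N_t^\star\sim\mathrm{Binomial}(N_t,p_t(\mathcal H_t))$, treat $k=0$ (certain failure) and $k=N_t$ (zero bound via $1-b_t^0=0$) separately, and apply Theorem~\ref{thm:greedy_hacking_cutoff} to mixed sets. You then replace $\Gamma_t(\mathcal C_t)$ by the $\mathcal H_t$-measurable envelope $\overline\Gamma_t(\mathcal H_t)$ using monotonicity of $\Phi$, and average over the binomial weights. Your explicit tower-property and measurability bookkeeping (fixed $\lambda$, conditioning on $(\mathcal H_t,\mathcal C_t)$) spells out steps the paper leaves implicit, but does not change the argument.
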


For the generator-averaged results, we further assume that, conditional on $\mathcal H_t$, candidate
continuations are sampled independently from the generator and that verifier noises are conditionally
independent of the candidate-sampling randomness given the realized candidate set.

\begin{restatable}{corollary}{depthaccumulation}{\normalfont {(Depth-wise accumulation under Markovian generation).}}
\label{cor:depth_accumulation_generator}
Under the assumptions of Theorem~\ref{thm:generator_averaged_greedy_failure} for all $t \in [T_{\max}]$, let $\mathcal S_t = \left\{ \widehat \tau_{\mathrm{greedy}}^{(t)} \in \widetilde{\mathcal P}_t^\star \right\}$ denote the event that greedy search remains viable through depth $t$. 
We define $\delta_t^\star(\mathcal H_t) = \sup_{\lambda\in\mathbb R} \delta_t(\lambda;\mathcal H_t)$ on $\gS_{t-1}$ and $\delta_t^\star(\mathcal H_t) \triangleq 0$ on $\gS_{t-1}^c$.
Then, it holds that $\Pr(\mathcal S_{T_{\max}}) \le \mathbb E\left[ \prod_{t=1}^{T_{\max}} \left(1-\delta_t^\star(\mathcal H_t)\right) \right].$ 
\end{restatable}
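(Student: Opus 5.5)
The plan is to peel off one depth at a time using the tower property, with Theorem~\ref{thm:generator_averaged_greedy_failure} supplying the per-step conditional survival bound. First, I would check that the events are nested: $\gS_t \subseteq \gS_{t-1}$. A viable prefix of length $t$ has a viable prefix of length $t-1$. Under greedy search with $w_t=1$, $\widehat\tau^{(t-1)}_{\mathrm{greedy}} \preceq \widehat\tau^{(t)}_{\mathrm{greedy}}$. Hence $\mathbf 1_{\gS_{T_{\max}}} = \prod_{t=1}^{T_{\max}} \mathbf 1_{\gS_t}$, and more usefully $\mathbf 1_{\gS_t} = \mathbf 1_{\gS_{t-1}}\mathbf 1_{\gS_t}$, with $\gS_0$ the sure event since $\tau^{(0)}=x$ is trivially viable.

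The key one-step inequality is
\[
\Pr(\gS_t \mid \gH_t) \le \mathbf 1_{\gS_{t-1}}\bigl(1-\delta_t^\star(\gH_t)\bigr) \quad \text{a.s.}
\]
I would make sure $\gS_{t-1}$ is $\gH_t$-measurable, since the history before generating depth-$t$ candidates contains the greedy choice at depth $t-1$. On $\gS_{t-1}^c$ both sides vanish, because $\gS_t\subseteq\gS_{t-1}$. On $\gS_{t-1}$, the hypothesis of Theorem~\ref{thm:generator_averaged_greedy_failure} holds, so $\Pr(\gS_t^c\mid\gH_t)\ge \delta_t(\lambda;\gH_t)$ for every $\lambda$. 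Taking the supremum over $\lambda$ gives the bound with $\delta_t^\star$. Here I would check that $\delta_t^\star$ is $\gH_t$-measurable: the supremum can be restricted to rational $\lambda$ by continuity of $\Phi$. The convention $\delta_t^\star=0$ on $\gS_{t-1}^c$ makes the inequality valid everywhere.

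Next, I would define $M_t = \prod_{s=1}^{t}(1-\delta_s^\star(\gH_s))$ and prove $\E[\mathbf 1_{\gS_t}] \le \E[\mathbf 1_{\gS_{t}}\cdot\text{stuff}]$ by backward induction. Specifically, I would show that for each $k$,
\[
\E\Bigl[\mathbf 1_{\gS_{T_{\max}}}\Bigr] \le \E\Bigl[\mathbf 1_{\gS_{k}} \prod_{t=k+1}^{T_{\max}}(1-\delta_t^\star(\gH_t))\Bigr].
\]
This is easier to run forward. Set $A_k = \E[\mathbf 1_{\gS_k}\, G_k]$, where $G_k=\prod_{t>k}(1-\delta^\star_t)$. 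That ordering is awkward because later factors depend on later histories, so I would instead run the induction from the top. Writing
\[
\mathbf 1_{\gS_{T}} = \mathbf 1_{\gS_{T}}\mathbf 1_{\gS_{T-1}},
\]
I condition on $\gH_T$ and use the one-step bound to get
\[
\E[\mathbf 1_{\gS_T}] \le \E[\mathbf 1_{\gS_{T-1}}(1-\delta_T^\star)].
\]
This step needs $\mathbf 1_{\gS_{T-1}}$ to be $\gH_T$-measurable, which it is. The remaining factor $(1-\delta_T^\star)$ is $\gH_T$-measurable but not $\gH_{T-1}$-measurable, which is the main obstacle.

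The naive iteration conditions the product of later factors on $\gH_{T-1}$, and those factors are not determined by $\gH_{T-1}$. I would resolve this by using $1-\delta_T^\star\le 1$ to drop the factor, but that loses the product. The intended route is therefore to bound each step by a quantity measurable with respect to earlier history. Alternatively, I would accept the literal form and argue via the tower property in the order $t=1,\dots,T_{\max}$, working with the process $Z_t = \mathbf 1_{\gS_t}/M_t$ and showing it is a supermartingale with respect to $\gH_{t+1}$. The one-step bound gives
\[
\E[\mathbf 1_{\gS_t}\mid\gH_t] \le \mathbf 1_{\gS_{t-1}}(1-\delta^\star_t).
\]
Since $M_t$ is $\gH_t$-measurable, this yields $\E[Z_t\mid\gH_t]\le Z_{t-1}$, so $\E[Z_{T_{\max}}]\le 1$. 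Converting $\E[\mathbf 1_{\gS_{T_{\max}}}/M]\le 1$ into $\E[\mathbf 1_{\gS_{T_{\max}}}]\le \E[M]$ is exactly the delicate point. I would first try to show it directly when the $\delta_t^\star$ are deterministic, or bounded by deterministic quantities, in which case it is immediate. In general I would check whether Jensen's inequality or a pathwise coupling can close the gap. Failing that, I would state the bound with the conditional-product form $\E[\prod_t(1-\delta_t^\star)]$ interpreted via this supermartingale argument.
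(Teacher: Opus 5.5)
\textbf{There is a genuine gap: the proposal never reaches the stated inequality.} Your groundwork is sound. You establish nestedness $\mathcal S_t\subseteq\mathcal S_{t-1}$, the needed measurability, and the one-step bound $\Pr(\mathcal S_t\mid\mathcal H_t)\le \mathbf{1}_{\mathcal S_{t-1}}(1-\delta_t^\star)$. You also correctly identify the one real difficulty, which the paper's proof passes over with ``using the tower property recursively'': later factors are not measurable with respect to earlier histories. But you do not resolve it.

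Your supermartingale fallback cannot resolve it either. The bound $\mathbb E[\mathbf{1}_{\mathcal S}/M]\le 1$ does not imply $\Pr(\mathcal S)\le\mathbb E[M]$. For instance, take $M=s$ on $\mathcal S$ with $\Pr(\mathcal S)=s$, and $M\approx 0$ off $\mathcal S$; then $\mathbb E[M]\approx s^2<s$. Cauchy--Schwarz extracts only $\Pr(\mathcal S_{T_{\max}})\le\sqrt{\mathbb E[M_{T_{\max}}]}$. In addition, $Z_t$ is undefined when $\delta_t^\star=1$, which happens whenever $p_t(\mathcal H_t)=0$.

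\textbf{The missing idea.} The convention $\delta_t^\star=0$ on $\mathcal S_{t-1}^c$ must be used in the aggregation, not only to make the one-step bound hold everywhere. It implies that
\[
G_k\triangleq\prod_{t=k+1}^{T_{\max}}\bigl(1-\delta_t^\star(\mathcal H_t)\bigr)
\]
lies in $[0,1]$ and equals $1$ on $\mathcal S_k^c$. So you never need $G_k$ to be $\mathcal H_k$-measurable. You only need the decomposition $\mathbb E[G_k\mid\mathcal H_k]=X_k+1-p_k$, where
\[
X_k\triangleq\mathbb E[\mathbf{1}_{\mathcal S_k}G_k\mid\mathcal H_k]\le p_k\triangleq\Pr(\mathcal S_k\mid\mathcal H_k)\le c_k\triangleq 1-\delta_k^\star \quad\text{on } \mathcal S_{k-1}.
\]
Since $G_{k-1}=c_kG_k$ and $X_k$ vanishes off $\mathcal S_{k-1}$, you get
\[
\mathbb E\bigl[\mathbf{1}_{\mathcal S_{k-1}}G_{k-1}\mid\mathcal H_k\bigr]-\mathbb E\bigl[\mathbf{1}_{\mathcal S_k}G_k\mid\mathcal H_k\bigr]=\mathbf{1}_{\mathcal S_{k-1}}\bigl[c_k(X_k+1-p_k)-X_k\bigr]\ge\mathbf{1}_{\mathcal S_{k-1}}(c_k-p_k)\ge 0,
\]
using $(1-c_k)X_k\le(1-c_k)p_k$. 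This is exactly the descending chain $\mathbb E[\mathbf{1}_{\mathcal S_k}G_k]\le\mathbb E[\mathbf{1}_{\mathcal S_{k-1}}G_{k-1}]$ that you wrote down and then abandoned. Iterating it from $k=T_{\max}$ down to $k=1$, with $G_{T_{\max}}=1$ and $\mathcal S_0$ sure, gives the corollary.

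\textbf{Why the convention is essential.} The one-step bound alone does not suffice, even for $T_{\max}=2$. Take $1-\delta_1^\star=\Pr(\mathcal S_1)=1/2$, sure survival at step $2$ on $\mathcal S_1$, and a factor $0$ on $\mathcal S_1^c$. Every one-step inequality holds, yet $\Pr(\mathcal S_2)=1/2>1/4=\mathbb E[(1-\delta_1^\star)(1-\delta_2^\star)]$.
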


Theorem~\ref{thm:generator_averaged_greedy_failure} separates one-step failure into generator coverage and verifier selection. 
The term $(1-p_t(\mathcal H_t))^{N_t}$ is the probability that the generator samples no viable continuation, while the summation averages the cutoff-based over-optimization event of Theorem~\ref{thm:greedy_hacking_cutoff} over the random number of viable candidates. 
Thus, the key point is that the same increase in width that improves generator coverage can also increase the number of non-viable opportunities for verifier hacking. 
Consequently, the benefit of increasing $N_t$ depends on whether the induced gain in viable coverage dominates the growth of the non-viable extreme-value frontier; when $p_t(\mathcal H_t)$ is small, the candidate pool remains highly imbalanced, so even sampled viable continuations can be eliminated by verifier over-optimization. 
Therefore, simply increasing test-time computes with a larger $N_t$ may not effectively address the over-optimization in PRM-guided search.

Corollary~\ref{cor:depth_accumulation_generator} shows that local risks compound multiplicatively across reasoning depth. 
Specifically, successful decoding requires the search to remain inside the viable-prefix set at every step; a single coverage or selection failure is irreversible. 
Thus, the trajectory-level survival probability is controlled by the product of step-wise survival probabilities.
Consequently, even small per-step lower bounds on failure can yield a large path-level failure probability over long horizons; if $\delta_t^\star(\mathcal H_t) \ge \underline\delta_t$ almost surely for $\gH_t$, then $\Pr\left(\exists t\le T_{\max}: \widehat\tau_{\mathrm{greedy}}^{(t)} \notin \widetilde{\gP}_t^\star \right) \ge 1-\prod_{t=1}^{T_{\max}}(1-\underline\delta_t).$
The core implication is that PRM-guided search is not assessed by its typical one-step behavior, but by its ability to avoid irreversible mistakes repeatedly across the entire reasoning trajectory.

\paragraph{Impact of beam search}
The greedy analysis extends directly to beam search. Let $\gB_t$ denote the top-$w_t$ prefixes under $\widehat V$, where $w_t=1$ recovers the greedy search. 
The greedy cutoff argument certifies failure when all viable prefixes stay below the cutoff and at least one non-viable prefix exceeds it; for width $w_t$, this must hold for at least $w_t$ non-viable prefixes. 
Therefore, as shown in Corollary \ref{thm:beam_hacking_cutoff} in Appendix \ref{appx:subsec:beam_hacking}, the one-step lower bound in \eqref{eq:greedy_bound} becomes
\begin{equation}
\Pr\left(\gB_t\cap \widetilde{\gP}_t^\star=\emptyset \mid \gC_t\right)
\ge
q_t(\lambda)\,
\Pr\left(\mathrm{Binomial}(N_t^-,r_t(\lambda))\ge w_t\right),
\end{equation}
with $q_t(\lambda),r_t(\lambda)$ as in Theorem~\ref{thm:greedy_hacking_cutoff}, recovering \eqref{eq:greedy_bound} at $w_t=1$. Beam search therefore increases the verifier-outlier tolerance from one to $w_t$, mitigating but not removing over-optimization. 
Since this bound preserves the cutoff structure, the generator-averaging and depth-accumulation arguments (Theorem~\ref{thm:generator_averaged_greedy_failure}, Corollary~\ref{cor:depth_accumulation_generator}) carry over with the binomial tail in place of the single-outlier term, leaving the path-level conclusion unchanged.

\paragraph{Implications for more sophisticated guided search.}
The lower bound in \eqref{eq:lb_def}  also clarifies what more sophisticated test-time procedures must improve. Methods such as MCTS, verifier refinement, or test-time training can help if they increase the effective separation $\Gamma_t(\gC_t)$, reduce harmful bias in the deterministic verifier component, or reduce the stochastic uncertainty $\sigma_t$ \citep{zhang2024rest, hosseini2024v, snell2024scaling}.
In the bound, these improvements act by either increasing the probability that the candidate set contains viable continuations or decreasing the non-viable exceedance probability $r_t(\lambda)$.
Nevertheless, these methods do not change the qualitative structure of the failure mode: any residual probability of overvaluing non-viable prefixes is amplified by maximization over candidates and repeated selection across steps \citep{gao2023scaling}. 
Guided decoding is therefore intrinsically challenging not because current verifiers are imperfect in a superficial sense.
It is challenging because the search procedure turns verifier imperfections into selection bias: among many prefixes, greedy maximization preferentially selects the prefix whose verifier error is most favorable \citep{thrun2014issues}.

\section{Mitigating Over-Optimization in Step-Level Guided Decoding}
In this section, we address the failure of the imperfect PRM-guided search by replacing the pure score maximization objective in \eqref{eq:greedy_selection} with a reward misspecification--robustness objective. 
In a nutshell, the robust optimization retains high-scoring trajectories that are also diverse in an abstract trajectory space, discouraging the beam $\gC_t$ from collapsing onto a narrow region of the search space where correlated errors in \(\hat V\) are most harmful.
In the following, we formalize the maximin PRM-guided search under reward misspecification and then derive a principled approximation algorithm for solving the robust optimization problem for efficient decoding.

\paragraph{Overview}
At a step-level pruning stage $t$, let the candidate trajectories $\gC_t = \{ \tau^{(t)}_1,\dots,\tau^{(t)}_N \}$ have the estimated step-level scores $s\in\mathbb{R}^N$ such that $s_i=\widehat{V}(\tau^{(t)}_i)$. 
Also, we let \(r_i = V(\tau^{(t)}_i)\) be the true score and $\nu_i = r_i - s_i$ be a signed value estimation error at $\tau^{(t)}_i$. 
For brevity, in the following, we omit superscript $(t)$ when there is no ambiguity. 
We instantiate the idea of robust step-level guidance with a repulsive penalty defined directly on pairwise trajectory similarity. To this end, we define a radial basis function (RBF) kernel 
\begin{equation} \label{eq:rbf_kernel}
k(\tau_i, \tau_j) = \exp \left( - \gamma d^2(\tau_i, \tau_j) \right),
\end{equation}
where \(\gamma>0\) is a bandwidth parameter, and
\( d(\tau_i, \tau_j) \triangleq \sqrt{ 2 - 2 \frac{\mathrm{emb}(\tau_i)^\top \mathrm{emb}(\tau_j)} {\lVert \mathrm{emb}(\tau_i) \rVert \cdot \lVert \mathrm{emb}(\tau_j) \rVert} }, \)
with \(\mathrm{emb}(\cdot)\) denoting a representation (e.g., the last hidden state of the PRM).
Here, the kernel takes a larger value if $\tau_i$ and $\tau_j$ have a small (angular) distance in the representation space from $\mathrm{emb}(\cdot)$. 
Also, the kernel matrix $\{ K_{ij} \triangleq k(\tau_i, \tau_j) \}$ is a symmetric positive semidefinite kernel, \(K_{ii}=1\), and \(0 \le K_{ij} \le 1\) for all \(i,j\) by construction.
To avoid manual tuning, we use the median heuristic \citep{gretton2012kernel, garreau2017large} that sets $\gamma$ by the inverse of the median pairwise distance.

With the RBF kernel, we seek to retain $m$ trajectories (e.g., $m = w_t$) by solving 
\begin{equation}
\label{eq:overview}
\max_{q \in \{0,1\}^N} \quad J(q)  \triangleq s^\top q - \tilde{B} \sqrt{q^\top K q} \qquad \text{s.t.} \qquad \mathbf{1}^\top q = m,
\end{equation}
where $q_i=1$ indicates that $\tau_i$ is selected. The first term recovers standard step-level guidance, while the second penalizes redundant selections. In particular, $\tilde{B}=0$ reduces to PRM-guided beam search, whereas larger $\tilde{B}$ increasingly favors geometrically diverse trajectories. 
Throughout the paper, we let the feasible set $\gQ \triangleq \left\lbrace q \in \{0, 1\}^N : \mathbf{1}^\top q = m \right\rbrace$. 
Finally, our framework changes only the score optimization mechanism, while leaving others fixed. 
This makes it compatible with better score estimation via explicit online training \citep{zhang2024rest, hosseini2024v, cui2025process} or sophisticated search techniques \citep{feng2023alphazero, chen2024alphamath}.

\subsection{Distributional robustness perspective} \label{subsec:kernel_persp}
Our goal is to identify structural assumptions under which controlling \(\nu^\top q = r^\top q - s^\top q\) leads to a kernel-based regularizer that controls the sensitivity of the selected support to estimation error.
Let \(\Phi_k\) denote the reproducing kernel Hilbert space (RKHS) associated with the kernel \(k\).
Suppose the estimation error is induced by a function \(e(\cdot)\in\Phi_k\) such that $\nu_i = e(\tau_i)$ with $\|e\|_{\Phi_k}\le B$ for some radius \(B>0\).
For common kernels such as RBF kernels, this can be interpreted as a smoothness assumption: prefixes that are close under the kernel tend to have similar verifier errors.
This assumption is a reasonable local approximation for modern transformer-based reward models, and similar assumptions have been widely studied
\citep{malladi2023kernel,zhang2024overcoming}.

This RKHS model is not separate from the verifier regularity condition in Assumption~\ref{ass:noise}; rather, it refines the structured bias component. In Assumption~\ref{ass:noise}, the deterministic verifier bias is
controlled only through a pointwise bound \(|\beta_t(\tau)|\le B_t\). By contrast, an RKHS model
places geometric structure on this bias. More generally, we can decompose the signed verifier
error as $\nu_i = r_i-s_i = \tilde{e}(\tau_i)+\zeta_i,$ 
where \(\tilde{e} \in\Phi_k \) with $\|\tilde{e}\|_{\Phi_k}\le B$ is a smooth structured bias component and \(\zeta_i\) is a residual stochastic component. 
Then, the reproducing property implies the pointwise bound
\(|\tilde{e}(\tau_i)| \le \|\tilde{e}\|_{\Phi_k}\sqrt{k(\tau_i,\tau_i)} \le B\) when \(K_{ii}=1\).
Thus the RKHS model satisfies the bounded-bias and Gaussian-noise requirements of Assumption~\ref{ass:noise}. The difference is that Assumption~\ref{ass:noise} only uses the crude pointwise control \(|\beta|\le B\), whereas the RKHS model further constrains how the bias varies across trajectories. 
Thus, the RKHS model strengthens the bounded-bias component of Assumption \ref{ass:noise} by requiring verifier errors to vary smoothly over the trajectory representation space, but it does not impose the i.i.d. random noise assumption in Assumption \ref{ass:noise}.
In particular, nearby trajectories under the kernel must have related verifier errors, allowing the model to capture correlated misspecification. The following result exploits this additional structure to obtain a geometry-aware robustness penalty.

\begin{restatable}{theorem}{rkhsworstcase}{\normalfont {(Worst-case aggregate error under RKHS smoothness).}} \label{thm:rkhs_worst_case}
If the estimation error is induced by a function \(e\in\Phi_k\) with \(\|e\|_{\Phi_k}\le B\), then for every \( {q \in \{0,1\}^N} \),
\begin{equation}
\label{eq:rkhs_bound}
\sup_{\|e\|_{\Phi_k}\le B}
\left|
\sum_{i=1}^N q_i e(\tau_i)
\right|
=
B \sqrt{q^\top K q}.
\end{equation}
\end{restatable}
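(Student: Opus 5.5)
The plan is to use the reproducing property to express the selected-sum functional as an inner product in $\Phi_k$, and then apply Cauchy--Schwarz with its equality case. Fix $q\in\{0,1\}^N$ and define the representer
\[
g_q \triangleq \sum_{i=1}^N q_i\, k(\cdot,\tau_i)\in\Phi_k .
\]
By the reproducing property $e(\tau_i)=\langle e, k(\cdot,\tau_i)\rangle_{\Phi_k}$, and by linearity we get $\sum_i q_i e(\tau_i)=\langle e, g_q\rangle_{\Phi_k}$. The reproducing property applied to $g_q$ itself gives
\[
\|g_q\|_{\Phi_k}^2=\sum_{i,j} q_i q_j\, k(\tau_i,\tau_j)=q^\top K q .
\]
This is nonnegative, so its square root is well defined; this is consistent with $K$ being positive semidefinite, as noted above.

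For the upper bound, Cauchy--Schwarz in $\Phi_k$ gives
\[
\bigl|\langle e,g_q\rangle_{\Phi_k}\bigr|\le \|e\|_{\Phi_k}\,\|g_q\|_{\Phi_k}\le B\sqrt{q^\top Kq}
\]
for every admissible $e$. Taking the supremum over the ball preserves this inequality.

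For attainment, I will split into two cases.
\begin{itemize}
\item If $q^\top Kq=0$, then $g_q=0$, every admissible $e$ gives the value $0$, and both sides of \eqref{eq:rkhs_bound} vanish. This case actually cannot occur for $q\neq 0$, since $K_{ij}\ge 0$ and $K_{ii}=1$, but it costs nothing to handle.
\item Otherwise, take $e^\star = B\, g_q/\|g_q\|_{\Phi_k}$. It lies in $\Phi_k$ with $\|e^\star\|_{\Phi_k}=B$, and
\[
\sum_i q_i e^\star(\tau_i)=\langle e^\star,g_q\rangle_{\Phi_k}=B\|g_q\|_{\Phi_k}=B\sqrt{q^\top Kq}.
\]
So the supremum is attained, and the absolute value does no harm since $e^\star$ achieves the positive extreme.
\end{itemize}

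The main point needing care is the bookkeeping when some $\tau_i$ coincide, or when the $k(\cdot,\tau_i)$ are linearly dependent. The argument above never uses linear independence, because it works with $g_q$ directly rather than with coefficient vectors, so no extra step is needed there. One should also state explicitly that $e$ ranges over all of the ball in $\Phi_k$, so that $e^\star$ is admissible; this is just the hypothesis of Theorem~\ref{thm:rkhs_worst_case}.
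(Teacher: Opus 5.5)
Your proof is correct and follows the same route as the paper: the reproducing property turns the sum into $\langle e, \sum_i q_i k(\tau_i,\cdot)\rangle_{\Phi_k}$, Cauchy--Schwarz together with $\|\sum_i q_i k(\tau_i,\cdot)\|_{\Phi_k}^2 = q^\top K q$ gives the upper bound, and equality comes from taking $e$ proportional to that representer. Your explicit normalization $e^\star = B\, g_q/\|g_q\|_{\Phi_k}$ and your check of the degenerate case only spell out what the paper's one-line attainment step leaves implicit.
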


Theorem~\ref{thm:rkhs_worst_case} shows that, under the RKHS smoothness condition, the kernel regularizer is the \emph{worst-case estimation error radius} of the aggregate decision \(q\) under a smoothness model on the error function. 
Consequently, it yields $r^\top q \ge s^\top q -B\sqrt{q^\top K q} = J(q) $ for all admissible error functions, yielding the natural robust optimization objective in \eqref{eq:overview}.
Thus, the kernel penalty measures how vulnerable the decision is to smooth but adversarial score misspecification.
In this sense, our formulation can be viewed as an instance of a distributionally robust optimization (DRO)-inspired robust decision rule: the selected subset is evaluated by its worst-case aggregate reward under an RKHS-bounded ambiguity set over verifier errors, analogous in spirit to ambiguity-set based DRO and kernel DRO formulations \citep{duchi2021learning, rahimian2022frameworks, zhu2021kernel}.

\begin{figure}
    \centering
    \includegraphics[width=0.75\linewidth]{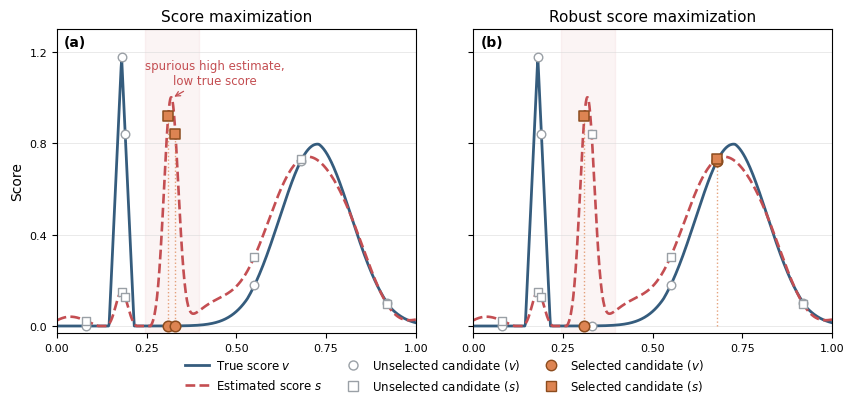}
    \caption{\textbf{Robust subset selection under score misspecification $(N=8, m=2)$.} The true score has a sharp and a broad peak, while the estimate introduces a spurious high-score region near the sharp peak. Maximizing $s$ selects two nearby points in this erroneous region. Adding diversity regularization spreads the selections, recovering a higher-value point from the broad peak and improving robustness.}
    \label{fig:score_maxim}
\end{figure}

Figure \ref{fig:score_maxim} illustrates that selecting many nearby points makes the aggregate estimate fragile, because smooth errors can align coherently with a large value of $q^\top K q$ across the support. 
Therefore, whenever the score estimator overestimates an entire local region of the feature space as illustrated in the red regime in Figure \ref{fig:score_maxim}, the optimizer would concentrate mass on many nearby points, thereby compounding the estimation error.
On the other hand, the regularized objective function as in \eqref{eq:overview} explicitly discourages such coherence; it forces the optimizer to trade off raw estimated reward against uncertainty induced by correlated misspecification. 
From this perspective, repulsion is a form of distributional robustness over feature space, remaining reliable under plausible smooth perturbations of the reward model.
Theorem \ref{thm:clustered_failure_rescue} in Appendix \ref{appx:kernel_narrow_failure} establishes the case when the kernel penalty mitigates the failure mode of the beam search studied in Corollary \ref{thm:beam_hacking_cutoff} in which the PRM beam collapses onto a concentrated group of non-viable prefixes when their PRM scores are only marginally better than a viable alternative.

\paragraph{Empirical choice of the RKHS radius \(B\).}
In Theorem~\ref{thm:rkhs_worst_case}, the constant \(B\) upper-bounds the RKHS norm of the unknown error function \(e\), i.e., \(\|e\|_{\Phi_k}\le B\), and estimating such a radius from data is known to be delicate \citep{maddalena2021deterministic, scharnhorst2022robust, tokmak2025safe}.
While it is typically treated as a hyperparameter, we use a calibration-based scaling rule to set its value in practice, when calibration data with both predicted and reference scores is available.
Given  $\nu_i = r_i - s_i$, we find a variational formulation such that $\tilde{B}$ is the RKHS norm of the minimum-norm function \(\tilde{e} \in\Phi_k\) satisfying \(\tilde{e}(\tau_i)=\nu_i\) for all \(i\) with $\nu = (\nu_1, \cdots, \nu_N)$.
This perspective yields a natural sample-based proxy  $\tilde{B} \triangleq \sqrt{\nu^\top K^\dagger \nu},$ which is the solution to $\tilde{B} \in \min_{\tilde{e} \in\Phi_k:  \tilde{e}(\tau_i)=\nu_i \ \forall i} \|\tilde{e}\|_{\Phi_k},$ with $K^\dagger$ being the Moore–Penrose inverse of $K$.
Thus, \( \tilde{B} \) is the smallest RKHS radius consistent with the observed error pattern on the sample, which could be multiplied by a constant factor in practice for preventing underestimation of off-sample error complexity.
When oracle step rewards are unavailable, we use the maximum within-trajectory variation in estimated step rewards as a proxy for the calibration residual magnitude. This heuristic is intended to scale the robust penalty to the observed instability of the verifier, rather than to provide a consistent estimate of the true RKHS error norm.

\paragraph{Connection to diversity-based regularized decoding.}
Although the regularizer in~\eqref{eq:overview} superficially resembles diversity-promoting decoding such as diverse beam search, its role is different: it arises as a robustness penalty against smooth PRM misspecification rather than as a generic lexical or semantic diversity heuristic. In particular, $\sqrt{q^\top Kq}$ controls the worst-case aggregate verifier error under the RKHS error model, whereas minimum Bayes risk (MBR) uses candidate geometry for consensus or centrality \citep{freitag2022high,eikema2022sampling,jinnai2025regularized}, determinantal point process-style selection promotes volume or low redundancy \citep{kulesza2012determinantal,meister2021determinantal}, and diverse beam search encourages lexical or semantic mode coverage during decoding \citep{vijayakumar2016diverse,shi2025semantic}. 
Further, this perspective provides a principled mechanism to select $\tilde{B}$ from the calibration set, unlike the previous regularized decoding methods.
We provide a detailed comparison in Appendix~\ref{app:diversity_comparison}.

\subsection{Method}
\label{sec:method}
\paragraph{Overview}
The main difficulty in~\eqref{eq:overview} is the non-separable term \(\sqrt{q^\top K q}\). 
Our approach removes this nonlinearity by a variational problem with an auxiliary scalar \(\eta > 0\). 
The variational reformulation yields a family of surrogate objectives that are submodular under a cardinality constraint, which can be solved by fast principled approximate algorithms. 
In this way, we have a principled approximate solution with  a controllable approximation error for the \emph{original} square-root objective.
Algorithm \ref{alg:grid-submodular} describes the full procedure, and we refer to Appendix \ref{appx:sec:details} for the exact configurations.

\begin{algorithm}[t]
\caption{Grid-based submodular surrogate maximization for maximin PRM-guided search}
\label{alg:grid-submodular}
\begin{algorithmic}[1]
\REQUIRE scores \(s \in \mathbb{R}^N\), kernel matrix \(K \in \mathbb{R}^{n \times n}\), penalty \(\tilde{B} > 0\), cardinality \(m\), grid parameter \(\xi > 0\), inner solver \(\mathcal{A}\)
\STATE Construct the grid $\gG_\xi = \left\{\eta_t = \sqrt{m}(1 + \xi)^h : h = 0,1,\dots,H
\right\}, \;
H \triangleq \left\lceil \log_{1 + \xi} \sqrt{m} \right\rceil$
\FOR{each \(\eta_i \in \gG_\xi\)}
    \STATE Compute \(q_i^\dagger \leftarrow \mathcal{A}(\eta_i)\), where \(\mathcal{A}(\eta_i)\) approximately solves $\max_{q \in \gQ} \left[ 
        s^\top q - \frac{\tilde{B}}{2\eta_i} q^\top K q
        \right]$
\ENDFOR
\STATE Return $q^\dagger \in \arg\max_{i \in \{0,\dots, H \}}
\left[ s^\top q_i^\dagger - \tilde{B} \sqrt{{q_i^\dagger}^\top K q_i^\dagger} \right]$
\end{algorithmic}
\end{algorithm}

\paragraph{Variational reformulation}
We use the identity $\sqrt{z} = \min_{\eta>0} \left( \frac{z}{2\eta} + \frac{\eta}{2} \right)$ for $z \ge 0.$
Applying this identity to $z = q^\top K q$ yields the exact representation 
\begin{equation}
\label{eq:J-eta-exact}
J(q) = \max_{\eta>0} J_\eta(q),
\qquad
J_\eta(q) \triangleq s^\top q - \frac{\tilde{B}}{2\eta} q^\top K q - \frac{\tilde{B}}{2}\eta,
\end{equation}
yielding $\max_{q \in \gQ} J(q) = \max_{q \in \gQ} \max_{\eta>0} J_\eta(q) = \max_{\eta>0} \max_{q \in \gQ} J_\eta(q).$

\paragraph{Outer optimization: Grid search over the auxiliary variable}
Note that, for any set $q$, the maximizing auxiliary variable in~\eqref{eq:J-eta-exact} is $\eta = \sqrt{q^\top K q}$, which is in $[\sqrt{m},  m]$ for every feasible \(q \in \gQ \). 
Therefore, we discretize this interval with a multiplicative grid:
\begin{equation}
\label{eq:eta-grid}
\gG_\xi = \left\{\eta_i = \sqrt{m}(1 + \xi)^h : h = 0,1,\dots,H
\right\}, \qquad
H \triangleq \left\lceil \log_{1 + \xi} \sqrt{m} \right\rceil .
\end{equation}

In \eqref{eq:eta-grid}, we note that the number of surrogate problems is $|\gG_\xi| = H+1 = O \left(\frac{\log m}{\xi}\right),$ which increases logarithmically in \(m\) and inversely in \(\xi\).
For each \(\eta\in\mathcal G_\xi\), we solve the surrogate problem $\max_{q\in\mathcal Q} \left[ s^\top q-\frac{\tilde B}{2\eta}q^\top Kq \right],$ where the constant term \(-\tilde B\eta/2\) is omitted because it does not affect the optimizer for fixed \(\eta\). This produces a candidate solution \(q_\eta^\dagger\). After all grid points are processed, we select among these candidates using the original square-root objective $J(q)=s^\top q-\tilde B\sqrt{q^\top Kq}.$ Thus, the variational surrogates are used only to generate candidate subsets, while the final outer selection is made according to the true robust objective. Equivalently, for each \(q_\eta^\dagger\), the auxiliary variable is analytically optimized out by evaluating \(J(q_\eta^\dagger)=\max_{\eta'>0}J_{\eta'}(q_\eta^\dagger)\).

\paragraph{Inner optimization: Principled approximate algorithm under submodularity} 
For a fixed \(\eta>0\), the surrogate problem $\max_{q \in \gQ} J_\eta(q) \triangleq s^\top q - \frac{\tilde{B}}{2\eta} q^\top K q - \frac{\tilde{B}}{2}\eta$ is a cardinality-constrained non-monotone submodular optimization problem.
We let $1_j \in \{0, 1 \}^N$ be the unit vector whose $j$-th element is 1 and remaining elements are 0. 
To obtain a practical solver with a formal approximation guarantee, we use a two-stage procedure:
\begin{enumerate}
    \item \textbf{Guaranteed warm start:} We run a randomized greedy algorithm for non-monotone submodular maximization under a cardinality constraint. We apply the randomized greedy algorithm of \citet{buchbinder2014submodular}: Starting from \(q_0=\varnothing\), for rounds \(r=1,\dots,m\), the algorithm computes the marginal gains
    \begin{equation}
    \Delta_r(j) \triangleq J_\eta(q_{r-1} + 1_j )-J_\eta(q_{r-1}), \qquad j \in \{i \in [N] :  \{q_{r-1} \}_i = 0 \},
    \end{equation}
    and forms the set \(M_r\) of the \(m\) elements with largest marginal gains among unselected trajectories (or all remaining elements if fewer than \(m\) remain).
    Then, it updates $q_r \leftarrow q_{r-1} + 1_{\iota(r)}$ by uniformly sampling $\iota(r)$ from the set $M_r$. 
    As a result, this step produces a feasible $q_m \in \gQ$.
    \item \textbf{Objective-improving post-processing:} We apply improving pairwise exchanges until a
    \(1\)-swap local optimum is reached. Such exchange-based local search is a standard tool in
    constrained non-monotone submodular maximization, particularly under cardinality and matroid
    constraints \citep{lee2009non,fadaei2011maximizing}. To this end, we use the gain of replacing active trajectory $i$ with inactive trajectory $j$ given by
    \begin{equation} \label{eq:swap-gain}
    \Delta_{i \to j}(q) = s_j - s_i -
    \frac{\tilde{B}}{\eta} \sum_{k \in [N] \setminus \{i\}: \{ q \}_{k} = 1 } (K_{jk} - K_{ik}).
    \end{equation}
\end{enumerate}

In the two-stage optimization algorithm, the first stage provides a provable approximation bound for each surrogate problem. In particular, for a general non-monotone submodular maximization, the randomized greedy algorithm finds $q_m$ such that $\E [J_\eta (q_m)] \geq \rho \max_{q \in \gQ} J_\eta (q)$ with $\rho \in (0, 1]$ (see \citet{buchbinder2014submodular} for the exact constant). 
The pairwise-swap stage is a monotone post-processing step that monotonically improves the variational objective. Given the finite feasible family $q \in \gQ$, since a cyclic search is impossible, it must terminates after finitely many improving swaps. 
The resulting solution is a 1-swap local optimum by construction. 
Here, for dense \(K\), the residual randomized greedy initialization runs in \(O(Nm)\), with \(m\) rounds of \(O(N)\) marginal updates and partial selection. 
To make the total running time polynomial, we cap the number of 1-swap refinement steps by \(30\).

\paragraph{Error bound}
We now state the main guarantee for the error of Algorithm~\ref{alg:grid-submodular} compared to the original square-root objective \(J\). 
The result separates the error into two terms: the approximation error induced by the multiplicative grid over \(\eta\), and the approximation error of the randomized greedy inner solver.
For simplicity, we consider the case that $J_\eta(q) \ge 0$ for all $q \in \gQ$ and $\eta \in [\sqrt{m}, m]$ as submodular maximization algorithms are stated for nonnegative objectives, and extend it to a general objective in Corollary \ref{cor:shifted-end-to-end} in Appendix \ref{appx_subsec:corollary}.

\begin{restatable}{theorem}{endtoendanalysis}{\normalfont{(End-to-end approximation guarantee).}}  \label{thm:end-to-end}
Let \(q^\star \in \arg\max_{q \in \gQ} J(q)\), and let \(q^\dagger\) be the output of Algorithm~\ref{alg:grid-submodular}. 
Let us assume that for every \(\eta \in \gG_\xi\), the randomized greedy warm start returns a set \(q_\eta^{\mathrm{rg}}\) satisfying $\E \left[J_\eta \left(q_\eta^{\mathrm{rg}}\right)\right]
\ge \rho \max_{q \in \gQ} J_\eta(q)$ and that the subsequent pairwise-swap local search returns \(q_\eta^\dagger\) with $J_\eta(q_\eta^\dagger)\ge J_\eta \left(q_\eta^{\mathrm{rg}}\right).$
Then, it holds that
\begin{equation}
\label{eq:end-to-end-main}
\mathbb E[J(q^\dagger)] \ge \rho \left( J(q^\star) - \frac{\tilde{B} m \xi^2}{2(1 + \xi)} \right).
\end{equation}
\end{restatable}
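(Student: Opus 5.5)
The plan is to compare the algorithm's output with the optimum $q^\star$ through a single well-chosen grid point $\eta\in\gG_\xi$. The argument rests on two elementary facts about the variational identity \eqref{eq:J-eta-exact}.

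The first fact is \emph{pointwise domination}: $J(q)=\max_{\eta'>0}J_{\eta'}(q)\ge J_\eta(q)$ for every $q\in\gQ$ and $\eta>0$.

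The second fact is an exact formula for the \emph{surrogate gap}. Writing $z=q^\top Kq$ and completing the square gives
\[
J(q)-J_\eta(q)=\tilde B\Bigl(\tfrac{z}{2\eta}+\tfrac{\eta}{2}-\sqrt z\Bigr)=\tilde B\,\frac{(\eta-\sqrt z)^2}{2\eta}.
\]
With these two facts, the proof reduces to choosing $\eta$ close to $\sqrt{q^{\star\top}Kq^\star}$ and then chaining inequalities.

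\textbf{Step 1 (grid coverage).} Set $\eta^\star=\sqrt{q^{\star\top}Kq^\star}$. Since $0\le K_{ij}\le 1$ and $K_{ii}=1$, we have $\eta^\star\in[\sqrt m,m]$. The grid starts at $\sqrt m$ and its last point is $\sqrt m(1+\xi)^H\ge \sqrt m\cdot\sqrt m=m$, by the choice $H=\lceil\log_{1+\xi}\sqrt m\rceil$. Hence there is a grid point $\eta=(1+u)\eta^\star$ with $0\le u\le\xi$: take the smallest grid point that is at least $\eta^\star$. The ratio of consecutive grid points is $1+\xi$, which gives the bound on $u$.

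\textbf{Step 2 (gap bound).} By the gap formula,
\[
J(q^\star)-J_\eta(q^\star)=\tilde B\,\frac{u^2\eta^\star}{2(1+u)}\le \tilde B\,\frac{\xi^2 m}{2(1+\xi)}.
\]
This uses that $u\mapsto u^2/(1+u)$ is increasing on $u\ge0$, together with $\eta^\star\le m$. Therefore
\[
\max_{q\in\gQ}J_\eta(q)\ge J_\eta(q^\star)\ge J(q^\star)-\frac{\tilde B m\xi^2}{2(1+\xi)}.
\]

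\textbf{Step 3 (chaining).} The output $q^\dagger$ maximizes $J$ over the candidates $\{q_{\eta_i}^\dagger\}$, and this candidate set includes $q_\eta^\dagger$ for the grid point chosen above. Combining this with pointwise domination, the swap-improvement hypothesis, and the randomized-greedy hypothesis gives
\[
\E[J(q^\dagger)]\ge \E[J(q_\eta^\dagger)]\ge \E[J_\eta(q_\eta^\dagger)]\ge \E[J_\eta(q^{\mathrm{rg}}_\eta)]\ge \rho\max_{q\in\gQ}J_\eta(q).
\]
Inserting Step 2 yields \eqref{eq:end-to-end-main}.

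The standing nonnegativity assumption $J_\eta\ge0$ on $[\sqrt m,m]$ is exactly what makes the $\rho$-approximation hypothesis meaningful at this $\eta$. No sign issues arise, because $\rho$ multiplies a quantity that is a lower bound on a nonnegative maximum.

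The main obstacle is Steps 1–2: making sure the multiplicative grid brackets $\eta^\star$ from above within a factor of $1+\xi$, and that the resulting loss scales as $\xi^2$ rather than $\xi$. The quadratic rate comes from the completed-square form of the gap, because the variational objective is flat at its optimum. The remaining inequalities in Step 3 are routine.
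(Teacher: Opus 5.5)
Your proposal is correct and follows essentially the same route as the paper's proof. Both arguments bracket $\eta^\star=\sqrt{q^{\star\top}Kq^\star}\in[\sqrt m,m]$ from above by a grid point within a factor $1+\xi$, bound the surrogate gap $\tilde B(\eta-\eta^\star)^2/(2\eta)$ (which the paper writes as $\tilde B\,\frac{\eta^\star}{2}\frac{(r-1)^2}{r}$ with $r=\eta/\eta^\star$), and then chain the outer selection step, the domination $J\ge J_\eta$, the swap-improvement hypothesis and the randomized-greedy hypothesis.

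One minor point: the last step $\rho\max_{q\in\gQ}J_\eta(q)\ge\rho\bigl(J(q^\star)-\tfrac{\tilde Bm\xi^2}{2(1+\xi)}\bigr)$ needs only $\rho>0$. Nonnegativity of $J_\eta$ matters only for justifying the assumed $\rho$-guarantee, not for any sign issue in the chain.
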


Theorem~\ref{thm:rkhs_worst_case} shows that $J(q)=s^\top q-\tilde B\sqrt{q^\top Kq}$ is a certified lower bound on the true aggregate score \(r^\top q\) under RKHS-bounded verifier
misspecification. Theorem~\ref{thm:end-to-end} then shows that Algorithm~\ref{alg:grid-submodular} approximately maximizes this certificate, up to the inner-solver factor \(\rho\) and the grid error \(O(\tilde Bm\xi^2/(1+\xi))\).
This guarantee illustrates why adding the kernel penalty is effective against over-optimization. 
Maximizing \(s^\top q\) alone does not control how much of the selected score may be due to possibly coherent verifier error; a spuriously overestimated local cluster can therefore dominate the entire beam (cf. Figure \ref{fig:score_maxim}). 
The robust objective prevents this within the RKHS uncertainty class by subtracting exactly the worst-case aggregate error radius of the selected set. 
Thus, a set is retained only when its nominal PRM advantage remains positive after charging for its exposure to structured verifier misspecification.

\section{Experiments}
We evaluate maximin search on a series of mathematical reasoning benchmarks: GSM8K \citep{cobbe2021training}, MATH-500 \citep{hendrycks2021measuring,lightman2023let}, AIME'24\footnote{\url{https://huggingface.co/datasets/HuggingFaceH4/aime_2024}}, and AIME'25\footnote{\url{https://huggingface.co/datasets/MathArena/aime_2025}}.
The benchmarks are ranging from grade-school level to competition level problems, and report accuracy across all benchmarks. For AIME datasets, which contain only 30 samples, we use three different random seeds and report the average accuracy.

\paragraph{Base models and configuration.}
We use off-the-shelf language models as base generators, including Qwen2.5-Math-1.5B \citep{yang2024qwen25mathtechnicalreportmathematical} and Phi-3.5-mini-instruct \citep{abdin2024phi}, together with the Skywork o1-PRM-1.5B reward model \citep{he_2024_16998085}. 
These models are chosen for two complementary reasons. Different generator families test whether the method depends on a particular generator, while their reasoning strength is high enough to produce viable continuations but low enough to leave many non-viable candidates, making step-level search decisions non-degenerate.
All models are used without fine-tuning, online adaptation, or task-specific training. 
For a generated trajectory $\tau$, the PRM assigns a reward to each reasoning step.
We follow \citet{he_2024_16998085} by using the average step reward as the trajectory-level value estimate $\hat{V}(\tau)$. 
We use zero-shot chain-of-thought prompts for all generators, with full prompt templates provided in Appendix \ref{appx:sec:details}. We follow the standard generation configuration described in Appendix \ref{appx:sec:details}. The hyperparameters and solver settings for the nested optimization in maximin search are provided in Appendix \ref{appx:sec:details}.

\paragraph{Baselines.}
We evaluate all methods under matched inference budgets of $N \in \{16, 64\}$ trajectories. The budget is defined as the maximum number of parallel trajectories maintained during decoding: the beam width for beam search, the sample size $N$ for Best-of-N methods, and $4m$ for PRM-guided search, where we use a fixed branch factor of $4$ and set the search width to $m=\lfloor N/4 \rfloor$. We compare maximin search against five baselines spanning three classes.

\begin{itemize}
    \item \textbf{No external supervision:} greedy decoding and beam search.
    Greedy decoding selects the highest-probability token at each step, while beam search keeps and expands the top-$N$ partial trajectories by model likelihood.

    \item \textbf{Outcome-level supervision:} Best-of-N (BoN) and MBR-BoN~\citep{jinnai2025regularized}.
    BoN selects the trajectory with the highest estimated outcome score among $N$ samples, while MBR-BoN selects the one maximizing
    $\hat{V}(\tau) + 0.1 \cdot \frac{1}{N}\sum_{i=1}^N U(\tau,\tau_i)$, where     $U(\tau,\tau') = \mathrm{cossim}(\mathrm{emb}(\tau), \mathrm{emb}(\tau'))$.
    \item \textbf{Step-level supervision:} SBS~\citep{chen2024alphamath}, a PRM-guided beam search that keeps the top-$m$ partial trajectories by PRM score.
\end{itemize}

\begin{table}[t]
\centering
\caption{
\textbf{
Benchmark results under different generators and inference budgets.}
Best results within each generator--dataset--budget block are bolded; second-best results are underlined. Greedy uses $N=1$.
We report the standard deviations in Table \ref{tab:aime_std_results} in Appendix \ref{appx:sec:details}.}
\label{tab:qwen25_phi_results}
\small
\setlength{\tabcolsep}{4pt}
\newcommand{\tb}[1]{{\fontseries{b}\selectfont #1}}
\begin{tabular}{llcccccccc}
\toprule
\multirow{2}{*}{Generator} & \multirow{2}{*}{Method}
& \multicolumn{2}{c}{GSM8K}
& \multicolumn{2}{c}{MATH-500}
& \multicolumn{2}{c}{AIME'24}
& \multicolumn{2}{c}{AIME'25} \\
\cmidrule(lr){3-4}
\cmidrule(lr){5-6}
\cmidrule(lr){7-8}
\cmidrule(lr){9-10}
& & $N=16$ & $N=64$ & $N=16$ & $N=64$ & $N=16$ & $N=64$ & $N=16$ & $N=64$ \\
\midrule
\multirow{6}{*}{Qwen2.5-Math-1.5B}
& Greedy
& 82.3 & 82.3
& 70.4 & 70.4
& 10.0 & 10.0
& 10.0 & 10.0 \\

& Beam search
& 84.3 & 84.9
& 72.2 & 75.2
& \tb{16.7} & \tb{16.7}
& {10.0} & 10.0 \\

& BoN
& 87.3 & 88.4
& \underline{75.2} & 76.8
& \tb{16.7} & \tb{16.7}
& \tb{15.6} & \tb{17.8} \\

& BoN-MBR
& \underline{88.0} & \underline{88.8}
& \underline{75.2} & \underline{77.4}
& \underline{14.4} & \tb{16.7}
& \tb{15.6} & \underline{12.2} \\

& SBS
& 87.4 & 87.5
& 72.4 & 76.6
& 8.9 & \underline{15.6}
& \underline{13.3} & \tb{17.8} \\

& Maximin
& \tb{88.9} & \tb{89.3}
& \tb{76.8} & \tb{78.8}
& \tb{16.7} & \tb{16.7}
& \tb{15.6} & \tb{17.8} \\

\midrule
\multirow{6}{*}{Phi-3.5-mini-instruct}
& Greedy
& 84.1 & 84.1
& 45.8 & 45.8
& 3.3 & 3.3
& 0.0 & 0.0 \\

& Beam search
& 85.8 & 84.7
& 53.6 & 53.0
& 3.3 & 3.3
& \tb{6.7} & {0.0} \\

& BoN
& 89.3 & 89.6
& \underline{63.4} & 65.8
& \underline{6.7} & 5.6
& \underline{2.2} & 3.3 \\

& BoN-MBR
& \underline{89.7} & 89.7
& 62.2 & 65.0
& \underline{6.7} & 6.7
& \underline{2.2} & 3.3 \\

& SBS
& 88.9 & \tb{91.4}
& 63.2 & \underline{66.4}
& 4.4 & \underline{7.8}
& \underline{2.2} & \underline{6.7} \\

& Maximin
& \tb{89.8} & \underline{91.2}
& \tb{63.8} & \tb{70.2}
& \tb{8.9} & \tb{8.9}
& \underline{2.2} & \tb{7.8} \\
\bottomrule
\end{tabular}
\end{table}

\subsection{Benchmark results}

Table \ref{tab:qwen25_phi_results} shows that naively using PRM scores as step-level guidance is not sufficient. 
As expected, external guidance substantially improves over unguided decoding; for example, on MATH-500 with Phi-3.5-mini-instruct at $N=64$, beam search obtains 53.0\%, while BoN/SBS/maximin reach 65.8\%/66.4\%/70.2\%.
However, SBS does not consistently outperform outcome-level selection methods such as BoN or BoN-MBR. 
For instance, with Qwen2.5-Math-1.5B, SBS is below BoN-MBR on GSM8K at $N=64$ (87.5 vs. 88.8), on MATH-500 at $N=16$ (72.4 vs. 75.2), and on AIME’24 at $N=16$ (10.0 vs. 13.3). 
This is consistent with prior work showing that direct step-level aggregation can be brittle \citep{cinquin2025limits}.

However, adding the quadratic kernel penalty through maximin search makes the same step-level PRM signal more effective. Maximin achieves the best or tied-best result in 14 of the 16 generator–dataset–budget blocks, and improves over SBS by roughly 17\%–35\% on average across generator and search-budget settings, with only a 0.27\% increase in runtime overhead when $N=16$. The gains are more pronounced on harder benchmarks; for example, on Phi-3.5-mini-instruct, maximin improves over SBS from 66.4\% to 70.2\% on MATH-500 at $N=64$, and from 6.7\% to 13.3\% on AIME’24 at $N=64$. 
Consistent with this trend, Figure \ref{fig:perf_by_difficulty} shows that on MATH-500 the methods perform similarly on easier levels, while maximin obtains the highest Level-5 accuracy for both $N=16$ and $N=64$. 
These results suggest that PRMs contain useful process-level information, but that robust aggregation is important for exploiting it. This improvement is obtained without online adaptation or fine-tuning, using the same off-the-shelf generators and PRMs.


\begin{figure}
    \centering
    \includegraphics[width=0.95\linewidth]{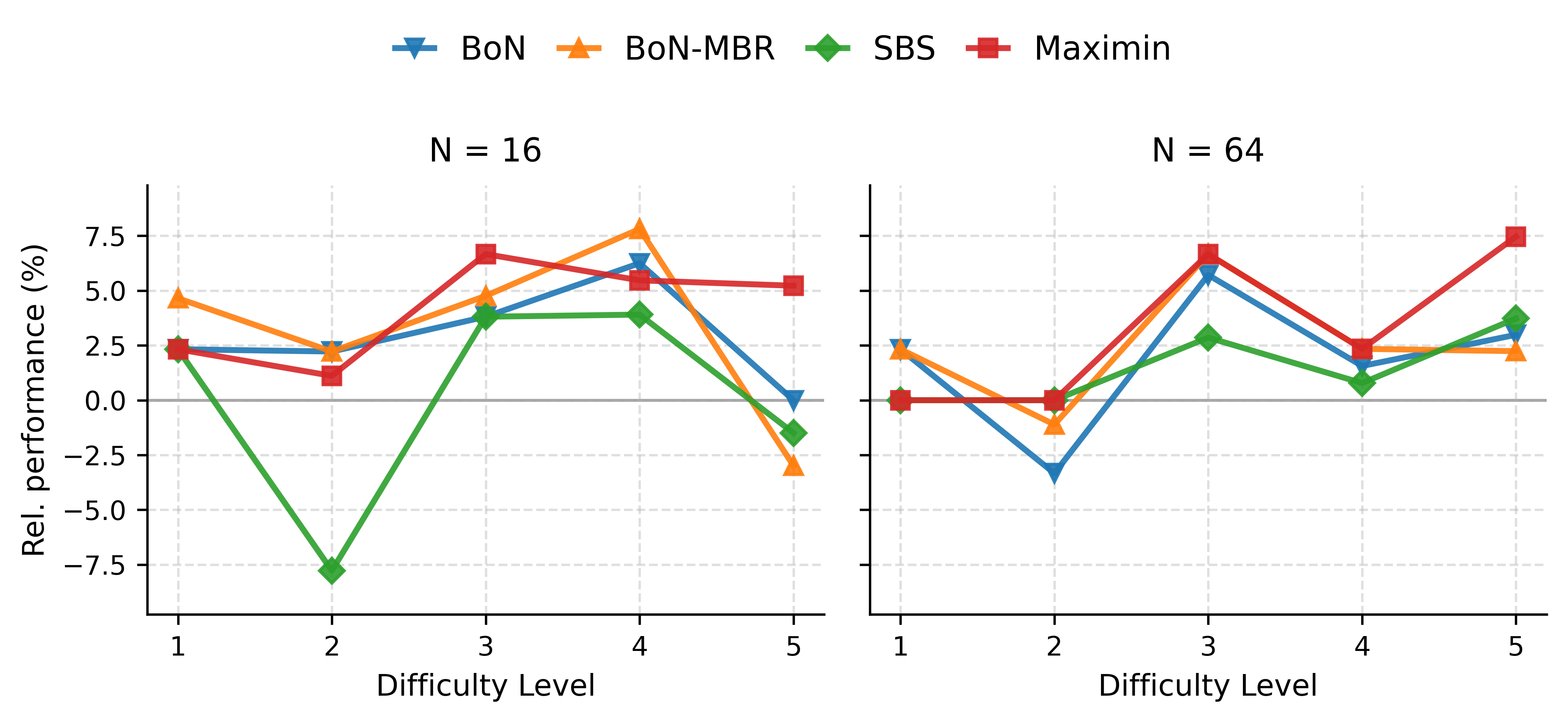}
    \caption{\textbf{Performance of various decoding strategies relative to beam search across difficulty levels (1–5) at $N=16$ (left) and $N=64$ (right) on MATH-500.} Series show the accuracy difference (in percentage points) from beam search; positive values indicate improvement over the baseline.}
    \label{fig:perf_by_difficulty}
\end{figure}

\begin{figure}[t]
    \centering
    \includegraphics[width=0.5\textwidth]{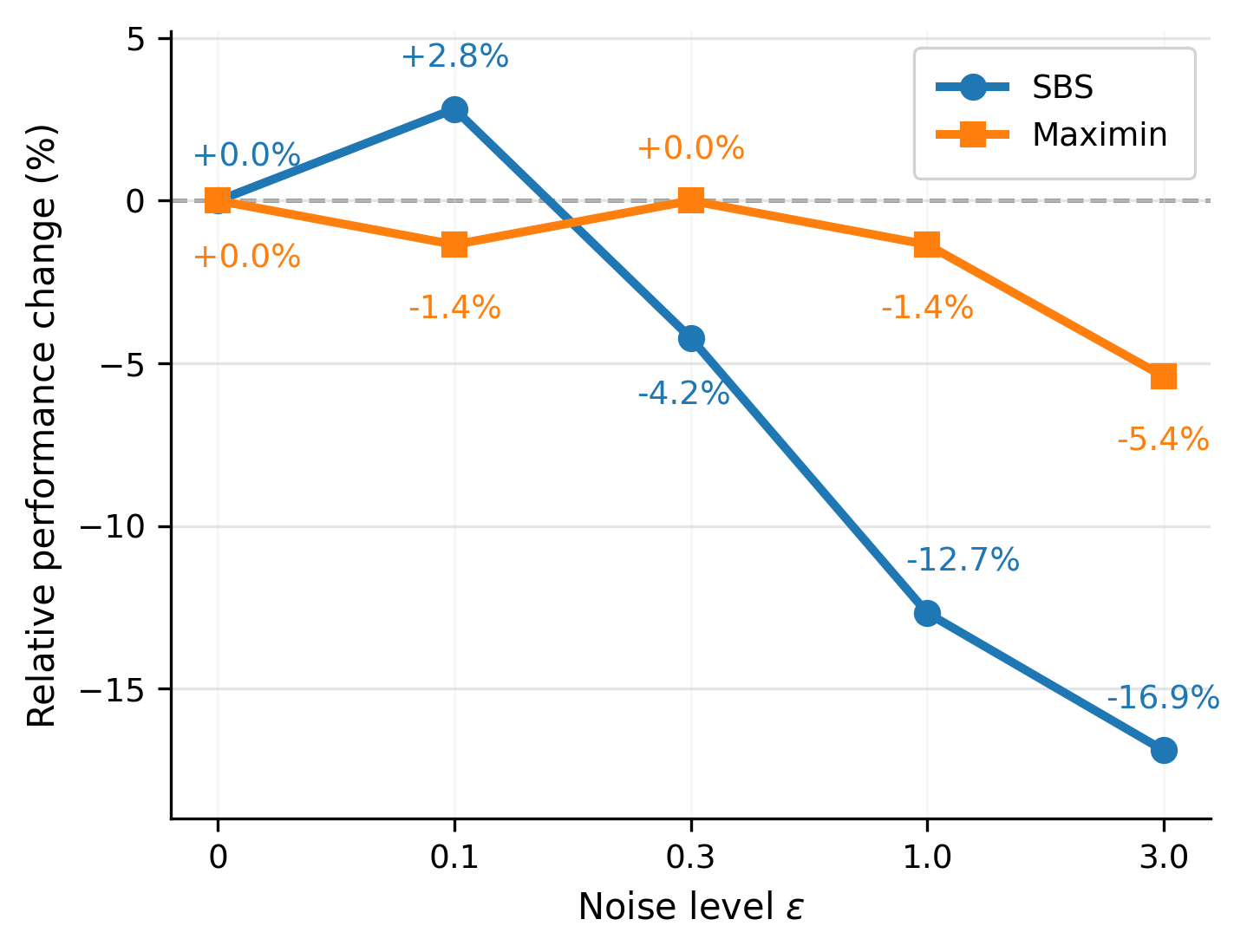}
    \caption{\textbf{Relative performance change (\%) under different levels of noise injections on verifier scores.} 
    }
    \label{fig:noise_impact}
\end{figure}

\subsection{Impacts of reward noise} \label{subsec:impacts_noise}
We test the robustness of SBS and maximin search to noisy reward estimates by adding independent Gaussian noise $\gN(0,\epsilon^2)$ to the process reward, with $\epsilon \in \{0.1,0.3,1.0,3.0\}$. 
All experiments use Qwen2.5-Math-1.5B guided by Skywork o1-PRM-1.5B on 100 random samples from MATH-500. 
On this subset, maximin and SBS achieve $79\%$ and $76\%$ accuracies, respectively. 
Although these numbers differ slightly from the full evaluation due to subsampling, the relative trends are consistent.

Figure~\ref{fig:noise_impact} shows that both methods degrade as the noise level increases, but maximin search is generally less sensitive to score perturbations. 
In particular, at $\epsilon=1.0$, SBS shows a $12.7\%$ relative degradation from its noise-free baseline, while maximin search changes by only $1.4\%$.
Even for extreme noise level at $\epsilon=3.0$, maximin search degrades by only $5.4\%$ as opposed to $16.9\%$ in SBS.
Interestingly, a small amount of noise improves SBS, increasing performance from $71\%$ to $73\%$ at $\epsilon=0.1$. This suggests that mild perturbations may regularize SBS by preventing over-reliance on small score differences. However, this benefit disappears once the perturbation becomes large enough to substantially corrupt the score ordering.
Overall, maximin search remains more robust because it does not depend solely on the noisy score estimation, but also preserves a less concentrated reasoning trajectories.

\begin{figure}[t]
    \centering
    \includegraphics[width=0.5\textwidth]{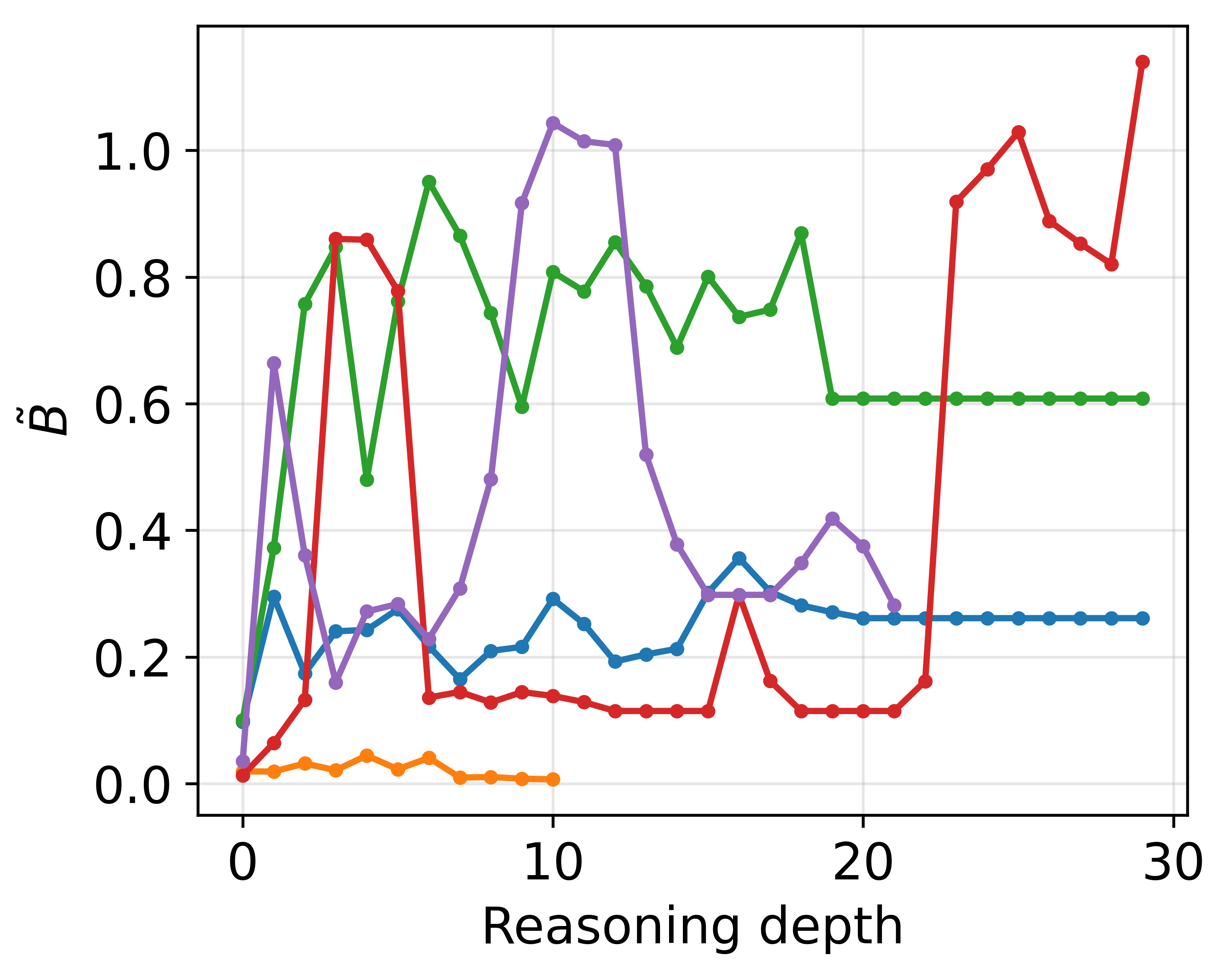}
    \caption{\textbf{Estimated $\tilde{B}$ across depth.} Each line corresponds to a different example. The estimated RKHS radius substantially different for each example. Across the reasoning depth within the same example, the radius estimate varies and does not exhibit a clear pattern.}
    \label{fig:b_estim}
\end{figure}

\subsection{Algorithmic design choices} \label{subsec:algo_design_choice}
With the setup same as in \S \ref{subsec:impacts_noise}, we examine three major algorithmic choices in maximin PRM-guided search: the local improvement step, the adaptive RKHS radius, and the representation geometry used by the RBF kernel.

\paragraph{Local improvement validates the robust objective.}
We introduce a 1-swap local improvement step to further optimize the robust objective in \eqref{eq:overview} after the initial randomized greedy selection.
Removing this step reduces accuracy from $79\%$ to $77\%$, only slightly above SBS.
This is notable because naive PRM-guided search can suffer from reward over-optimization: directly maximizing imperfect PRM scores may amplify score errors and lead to reward hacking (cf. \S\ref{sec:bad_step_guide}).
In contrast, further optimizing our robust objective improves performance.
Thus, the robust objective changes the role of optimization: rather than exacerbating PRM errors, additional optimization becomes beneficial.
This suggests that the maximin objective is a more reliable target for PRM-guided decoding under imperfect reward estimates.

\paragraph{The uncertainty scale should be adaptive, not fixed.}
We set the RKHS radius $\tilde{B}$ in \eqref{eq:overview} using the sample-dependent proxy
$\tilde{B}=\sqrt{\nu^\top K^\dagger \nu}$, where $\nu$ captures the maximum within-trajectory variation of process rewards, rather than treating $\tilde{B}$ as a global hyperparameter.
To assess whether this adaptivity is necessary, we sweep fixed values $\tilde{B}\in[0.03,0.1,0.3,1]$, which yield accuracies of $[71\%,78\%,77\%,74\%]$, respectively.
Although in Figure \ref{fig:b_estim} a tuned constant can match the adaptive rule on this small subset, no fixed value is uniformly reliable: the best choice depends on the PRM, base generator, and dataset.
Moreover, Figure~\ref{fig:b_estim} shows that the estimated $\tilde{B}$ varies substantially across reasoning depth and problem instance.
This variation is expected, since the appropriate uncertainty scale is determined by the local geometry of the reward function on the sampled trajectories.
A fixed radius cannot capture this heterogeneity.
Thus, beyond being hyperparameter-free, the adaptive estimate is essential for calibrating the algorithm's conservatism to the local reward geometry.

\paragraph{Choice of distance measure.}
Our RBF kernel computes distances after $\ell_2$-normalizing the PRM representations, making the kernel depend on angular distance rather than raw feature magnitude. 
This follows common practice in language representation spaces, where cosine or angular similarity is widely used for comparing text embeddings and is known to be more robust to anisotropy and norm artifacts in representations~\citep{li2020sentence,su2021whitening}.
When we instead use Euclidean distance in the unnormalized representation space, accuracy drops from $79\%$ to $74\%$.
This result is consistent with prior findings that the direction of language-model representations carries more useful semantic information than their raw norm, confirming that the kernel metric must be chosen to match the geometry of PRM representations.

\begin{figure}[t]
    \centering

    \begin{minipage}[t]{0.485\linewidth}
        \centering
        \includegraphics[width=\linewidth]{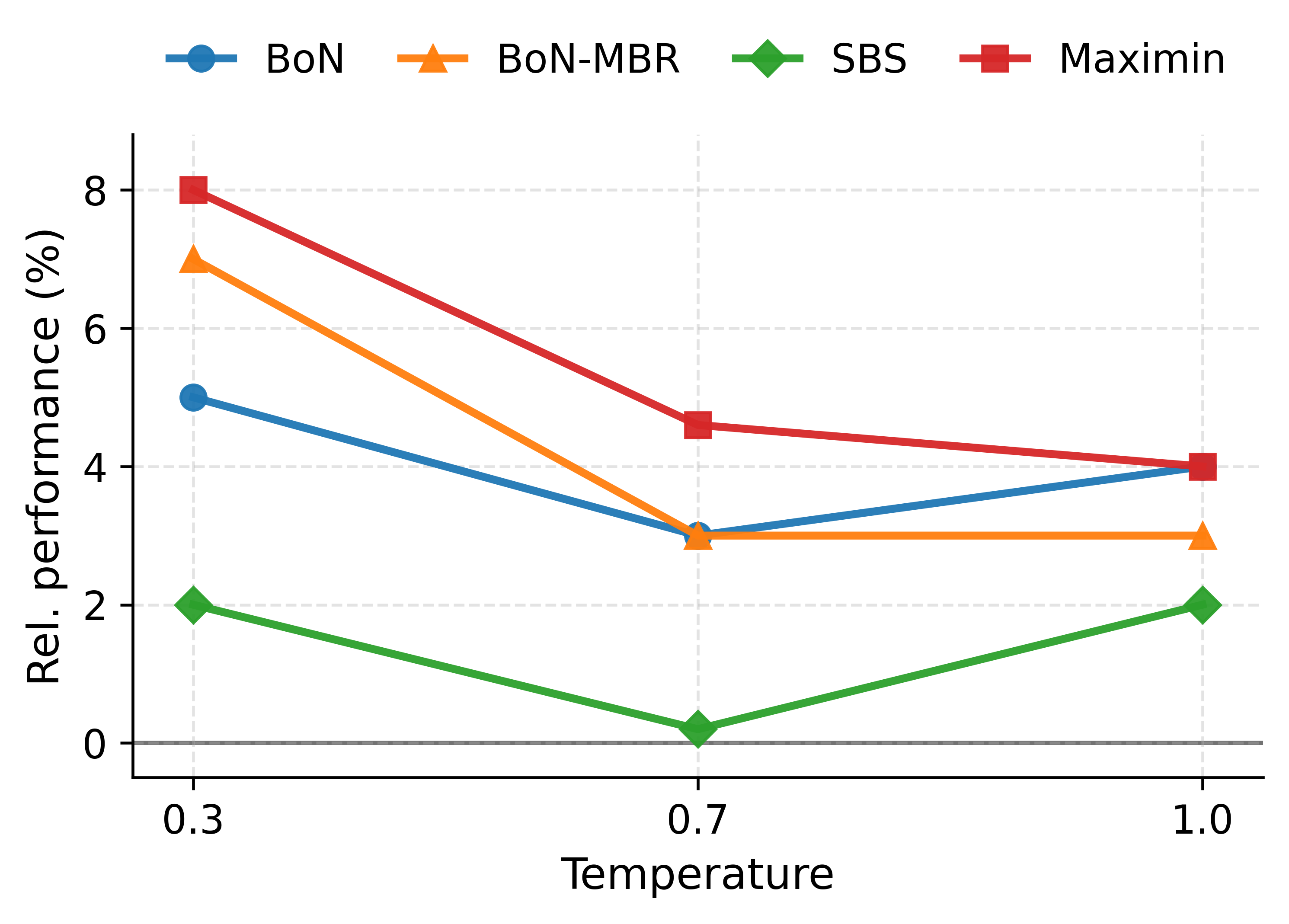}
        \captionof{figure}{\textbf{Temperature sensitivity of decoding strategies.} Curves show the accuracy difference from beam search (in percentage points) across sampling temperatures; positive values indicate improvement over the beam-search baseline.}
        \label{fig:temperature_sensitivity}
    \end{minipage}
    \hfill
    \begin{minipage}[t]{0.485\linewidth}
        \centering       \includegraphics[width=\linewidth]{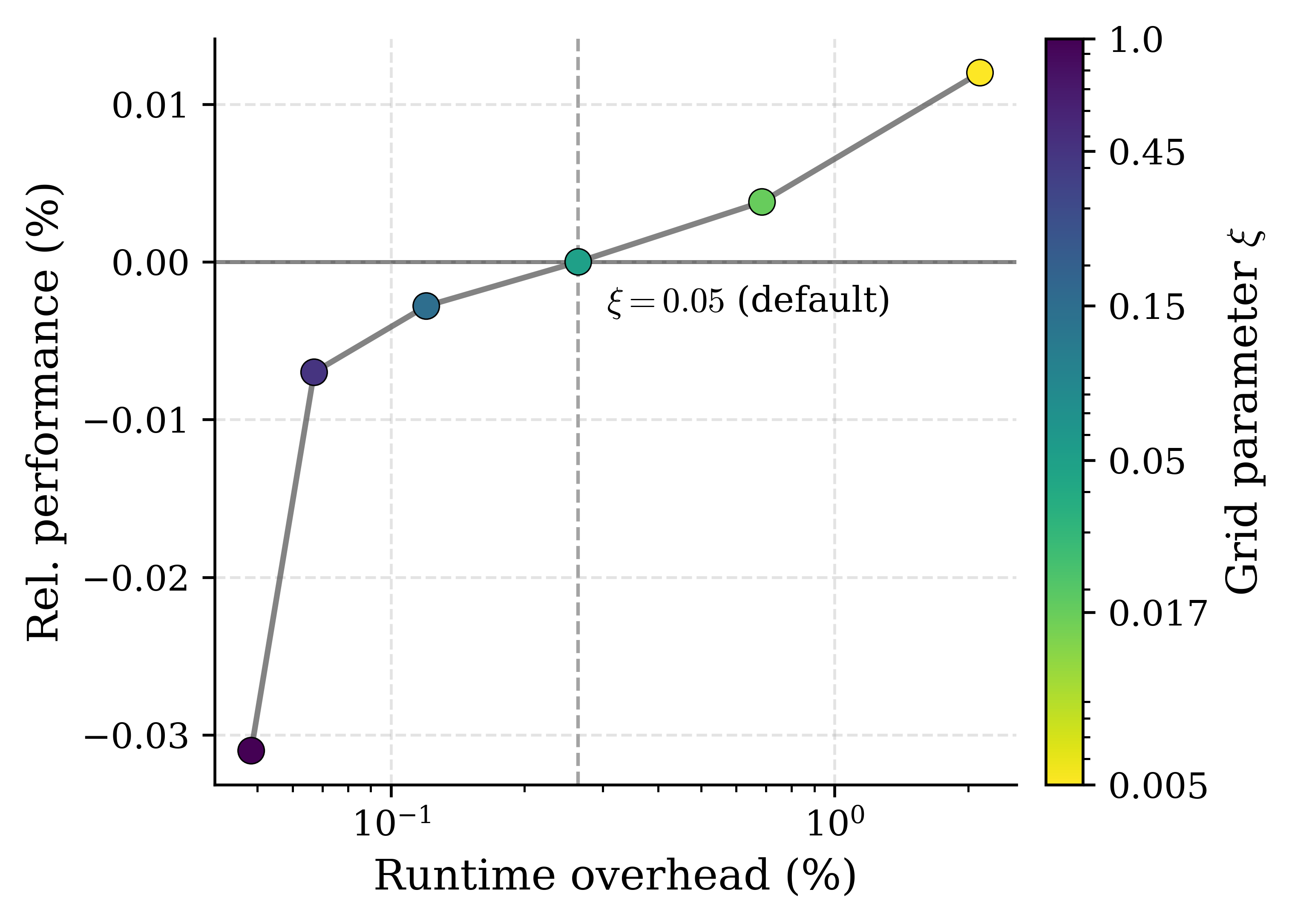}
        \captionof{figure}{\textbf{Sensitivity to the grid parameter $\xi$.} The $x$-axis shows runtime overhead compared to SBS, and the $y$-axis shows the relative accuracy change compared with the default $\xi=0.05$.}
        \label{fig:xi_sensitivity}
    \end{minipage}
\end{figure}

\subsection{Sensitivity analysis}
With the same setup as in \S \ref{subsec:impacts_noise}, we conduct sensitivity analyses with respect to temperature and grid parameters.

\paragraph{Temperature.}
A temperature sensitivity analysis examines how the methods behave as the randomness in generation changes. 
Figure \ref{fig:temperature_sensitivity} shows that maximin is consistently the strongest method across the tested temperatures $[0.3,0.7,1.0]$, improving over beam search by +8.0, +4.6, and +4.0 percentage points, respectively. 
The largest gain appears at the lowest temperature, where sampling is more concentrated and candidate trajectories may be more redundant; in this regime, the kernel penalty can more directly counter concentrated SBS selections. 
At higher temperatures, candidates may already be more diverse, which is consistent with the smaller gap between maximin and SBS.
Overall, maximin remains effective across temperatures.

\paragraph{Grid parameter $\xi$.}
Algorithm \ref{alg:grid-submodular} uses $\xi$ to control the resolution of the multiplicative grid $G_\xi$ over the auxiliary variable $\eta$, with smaller $\xi$ evaluates a finer grid for outer search. 
Figure \ref{fig:xi_sensitivity} illustrates  the relative accuracy under different $\xi$ values change with respect to the default $\xi=0.05$ against the runtime overhead compared to SBS. 
$\xi = 0.05$ incurs 0.26\% runtime overhead.
Relative to the default $\xi=0.05$, finer grids slightly improve accuracy ($+0.38\%$ at $\xi=0.005$; $+1.2\%$ at $\xi=0.017$) while increasing runtime overhead ($2.1\%$ and $0.69\%$). 
Coarser grids reduce overhead, with small changes up to $\xi=0.45$ ($-0.28\%$ at $\xi=0.15$; $-0.7\%$ at $\xi=0.45$), but the coarsest setting $\xi=1.0$ yields a larger drop ($-3.10\%$). 
Thus, performance is stable over moderate grid resolutions, while very fine grids mainly increase cost and very coarse grids can hurt accuracy.

\section{Conclusion}
We studied the vulnerability of PRM-guided search to over-optimization of imperfect step-level reward estimates. Our analyses identify an extreme-value failure mode in which noisy PRM scores can cause search to favor non-viable prefixes as reasoning depth increases, thereby pruning promising trajectories and wasting inference-time compute. 
To address this, we formulated process supervision as robust optimization under plausible reward perturbations and derived a training-free robust selection procedure for PRM-guided decoding. 
Empirically, this approach improves guided search without additional verifier training or test-time adaptation. 
These results suggest that scaling inference-time reasoning requires not only stronger learned evaluators, but also search algorithms that remain reliable when those evaluators are uncertain.

\section*{Acknowledgement}
Taejong Joo was supported in part by the Richard L. Francis Fellowship for this work.

\bibliographystyle{apalike}
\bibliography{references}

\newpage

\appendix

\section{Proof of claims} \label{appx:proof}

\subsection{Proof of Theorem \ref{thm:greedy_hacking_cutoff}}
\greedyhacking*

\begin{proof}
The core idea of the proof is to introduce a deterministic cutoff $a_t$ and lower-bounds to the failure event by the intersection of two simpler events: $\left\{ \max_{\tau_\star\in\gC_t^\star} \widehat V(\tau_\star) \le a_t \right\} \cap \left\{ \max_{\tau_-\in\gC_t^-} \widehat V(\tau_-) > a_t \right\}.$
Specifically, from Assumption \ref{ass:noise}, we can set $a_t= \sup_{\tau'_\star \in \gC_t^\star}  \{ V_t(\tau'_\star) + b_t(\tau'_\star) \} +\lambda$ for any fixed $\lambda \in \R$.
Given the best viable verifier score $M_t^\star = \max_{\tau_\star\in\gC_t^\star}\widehat V(\tau_\star),$ greedy selection fails whenever $M_t^\star\le a_t$ and $\max_{\tau_-\in\gC_t^-}\widehat V(\tau_-)> a_t.$
Indeed, on this event, some non-viable prefix has verifier score at least as large as the best viable prefix. Therefore,
\begin{equation}
    \Pr\left( \widehat\tau_{\mathrm{greedy}}^{(t)} \notin \widetilde{\gP}_t^\star \mid \gC_t \right)
    \ge \Pr\left( M_t^\star\le a_t, \max_{\tau_-\in\gC_t^-}\widehat V(\tau_-)> a_t \mid \gC_t \right).
\end{equation}

The viable and non-viable noise variables are independent, so the two events on the right-hand side are independent, and thus
\begin{equation}
\Pr\left( M_t^\star\le a_t,  \max_{\tau_-\in\gC_t^-}\widehat V(\tau_-)> a_t \mid \gC_t \right) 
    = \Pr(M_t^\star\le a_t\mid \gC_t) \Pr\left( \max_{\tau_-\in\gC_t^-}\widehat V(\tau_-)> a_t \mid \gC_t \right).
\end{equation}

First, for every viable prefix $\tau_\star\in\gC_t^\star$, $\widehat V(\tau_\star) = V(\tau_\star)+\beta_t(\tau_\star)+\varepsilon_t(\tau_\star) \le \sup_{\tau'_\star \in \gC_t^\star}  \{ V_t(\tau'_\star) + b_t(\tau'_\star) \} +\varepsilon_t(\tau_\star).$ 
Thus $M_t^\star\le a_t$ whenever $\varepsilon_t(\tau_\star)\le \lambda$ for all $\tau_\star\in\gC_t^\star$. Therefore, we get 
\begin{equation}
\Pr(M_t^\star\le a_t\mid\gC_t) \ge
\Pr\left( \varepsilon_t(\tau_\star)\le \lambda, \;\forall \tau_\star\in\mathcal C_t^\star \middle|  \mathcal C_t \right) = \Phi\left(\frac{\lambda}{\sigma_t}\right)^{N_t^\star}.
\end{equation} 

Second, for every non-viable prefix $\tau_-\in\gC_t^-$, $\widehat V(\tau_-) = V(\tau_-)+\beta_t(\tau_-)+\varepsilon_t(\tau_-) 
\ge \inf_{\tau'_- \in \gC_t^-} \{V(\tau'_-)+\beta_t(\tau'_-) \} + \varepsilon_t(\tau_-).$
Hence $\widehat V(\tau_-)> a_t$ whenever $\varepsilon_t(\tau_-) \ge a_t-\inf_{\tau'_- \in \gC_t^-} \{V(\tau'_-)+\beta_t(\tau'_-) \}.$
Since $a_t-\inf_{\tau'_- \in \gC_t^-} \{V(\tau'_-)+\beta_t(\tau'_-) \} = \Gamma_t(\gC_t) + \lambda,$ we have 
\begin{equation}
    \Pr(\widehat V(\tau_-)> a_t\mid\gC_t)
    \ge 1-\Phi\left( \frac{\Gamma_t(\gC_t)+\lambda}{\sigma_t} \right)
    = r_t(\lambda).
\end{equation}

Since the non-viable noises are independent by Assumption \ref{ass:noise},
\begin{equation} 
    \Pr\left( \max_{\tau_-\in\gC_t^-}\widehat V(\tau_-)> a_t \mid\gC_t \right) \ge 1-\left(1-r_t(\lambda)\right)^{N_t^-}.
\end{equation}

Combining the two bounds gives
\begin{equation}
    \Pr\left( \widehat\tau_{\mathrm{greedy}}^{(t)} \notin \widetilde{\gP}_t^\star \mid \gC_t \right)
    \ge \Phi\left(\frac{\lambda}{\sigma_t}\right)^{N_t^\star} \left[ 1- \Phi\left( \frac{\Gamma_t(\gC_t)+\lambda}{\sigma_t} \right)^{N_t^-} \right].
\end{equation}

Since this holds for every $\lambda\in\mathbb R$, taking the supremum over $\lambda$ gives the final claim.
\end{proof}

\subsection{Proof of Theorem \ref{thm:generator_averaged_greedy_failure}}
\generatoravg*

\begin{proof}
Let $K_t=N_t^\star=|\mathcal C_t^\star|.$
By the Markovian independent-generation assumption, $K_t\sim \mathrm{Binomial}(N_t,p_t(\mathcal H_t)).$

If $K_t=0$, then no viable continuation is present in the candidate set.
Therefore greedy search necessarily selects a non-viable prefix, and $\Pr\left( \widehat \tau_{\mathrm{greedy}}^{(t)} \notin \widetilde{\mathcal P}_t^\star \;\middle|\; K_t=0,\mathcal H_t \right) = 1.$

We now fix $1\le k\le N_t$. Conditional on $K_t=k$, we have $N_t^\star=k$ and $N_t^-=N_t-k.$
If $k=N_t$, then there are no non-viable candidates, and the verifier-selection failure probability is zero. This case is also handled by \eqref{eqn:temp_formula} because $1-b_t(\lambda;\mathcal H_t)^0=0$.

For $1\le k\le N_t-1$, Theorem~\ref{thm:greedy_hacking_cutoff} gives, for any realized mixed candidate set $\mathcal C_t$, 
\begin{equation} \label{eqn:temp_formula}
    \Pr\left( \widehat \tau_{\mathrm{greedy}}^{(t)} \notin \widetilde{\mathcal P}_t^\star \;\middle|\; \mathcal C_t \right) \ge \Phi\left(\frac{\lambda}{\sigma_t}\right)^k \left[
1- \Phi\left( \frac{\Gamma_t(\mathcal C_t)+\lambda}{\sigma_t} \right)^{N_t-k} \right].
\end{equation}

Using the envelope assumption $\Gamma_t(\mathcal C_t)\le \overline \Gamma_t(\mathcal H_t),$ and the monotonicity of $\Phi$, we obtain $\Phi\left( \frac{\Gamma_t(\mathcal C_t)+\lambda}{\sigma_t} \right) \le b_t(\lambda;\mathcal H_t).$
Therefore,
\begin{equation}
    \Pr\left( \widehat \tau_{\mathrm{greedy}}^{(t)} \notin \widetilde{\mathcal P}_t^\star \;\middle|\; K_t=k,\mathcal H_t \right) \ge a_t(\lambda)^k \left[ 1-b_t(\lambda;\mathcal H_t)^{N_t-k} \right].
\end{equation}

Averaging over $K_t\sim \mathrm{Binomial}(N_t,p_t(\mathcal H_t))$  gives
\begin{multline}
    \Pr\left( \widehat \tau_{\mathrm{greedy}}^{(t)} \notin \widetilde{\mathcal P}_t^\star \;\middle|\; \mathcal H_t \right) \ge \Pr(K_t=0\mid\mathcal H_t) + \sum_{k=1}^{N_t} \Pr(K_t=k\mid\mathcal H_t) a_t(\lambda)^k \left[ 1-b_t(\lambda;\mathcal H_t)^{N_t-k} \right]  \\
    = \left(1-p_t(\mathcal H_t)\right)^{N_t} + \sum_{k=1}^{N_t} {N_t\choose k} p_t(\mathcal H_t)^k \left(1-p_t(\mathcal H_t)\right)^{N_t-k} a_t(\lambda)^k \left[ 1-b_t(\lambda;\mathcal H_t)^{N_t-k} \right].
\end{multline}

Since the bound holds for every $\lambda\in\mathbb R$, taking the supremum over
$\lambda$ gives the final claim.
\end{proof}

\subsection{Proof of Corollary \ref{cor:depth_accumulation_generator}}
\depthaccumulation*

\begin{proof}
By Theorem~\ref{thm:generator_averaged_greedy_failure}, on any surviving history $\mathcal H_t$, $\Pr(\mathcal S_t^c\mid \mathcal H_t,\mathcal S_{t-1}) \ge \delta_t^\star(\mathcal H_t).$ Equivalently, $\Pr(\mathcal S_t\mid \mathcal H_t,\mathcal S_{t-1}) \le 1-\delta_t^\star(\mathcal H_t).$

Using the tower property recursively yields
\begin{equation}
    \Pr(\mathcal S_{T_{\max}}) \le \mathbb E\left[ \prod_{t=1}^{T_{\max}} \left(1-\delta_t^\star(\mathcal H_t)\right) \right].
\end{equation}

Taking complements gives the claimed failure lower bound.
\end{proof}

\subsection{Proof of Theorem \ref{thm:rkhs_worst_case}}
\rkhsworstcase*
\begin{proof}
By the reproducing property, $e(\tau_i)=\langle e, k(\tau_i,\cdot)\rangle_{\Phi_k}.$
Thus, we have 
\begin{equation}
    \sum_{i=1}^N q_i e(\tau_i) = \left\langle e, \sum_{i=1}^N q_i k(\tau_i,\cdot)\right\rangle_{\Phi_k}.
\end{equation}

By applying Cauchy--Schwarz, we get
\begin{equation} \label{tmp:cs_}
    \left| \sum_{i=1}^N q_i e(\tau_i) \right| \le \|e\|_{\Phi_k} \left\| \sum_{i=1}^N q_i k(\tau_i,\cdot) \right\|_{\Phi_k}.
\end{equation}

In \eqref{tmp:cs_}, the second factor is exactly
\begin{equation}
    \left\| \sum_{i=1}^N q_i k(\tau_i,\cdot) \right\|_{\Phi_k}^2 = \sum_{i,j=1}^N q_i q_j k(\tau_i,\tau_j) = q^\top K q.
\end{equation}

Therefore, we have
\begin{equation}
    \left| \sum_{i=1}^N q_i e(\tau_i) \right| \le B\sqrt{q^\top K q}.
\end{equation}

Equality is achieved by choosing \(e\) proportional to \(\sum_i q_i k(\tau_i,\cdot)\), which proves \eqref{eq:rkhs_bound}. 

\end{proof}

\subsection{Proof of Theorem \ref{thm:end-to-end}}
\endtoendanalysis*
\begin{proof}
Let \(q^\star\) be optimal for the original objective \(J\), and let $\eta^\star \triangleq \sqrt{{q^\star}^\top K q^\star} \in [\sqrt m,m].$
Since \(\gG_\xi\) is a multiplicative grid on this interval, there exists \(t^\star\) such that $\eta^\star \le \eta_{t^\star} \le (1 + \xi)\eta^\star.$
Using the variational representation in~\eqref{eq:J-eta-exact}, we get $J(q^\star)=J_{\eta^\star}(q^\star)$ and $J_{\eta_{t^\star}}(q^\star) = s^\top q^\star-\frac{\tilde{B}}{2\eta_{t^\star}}{q^\star}^\top K q^\star-\frac{\tilde{B}}{2}\eta_{t^\star}.$
Writing \(z^\star\triangleq{q^\star}^\top K q^\star\) and using \(\eta^\star=\sqrt{z^\star}\), we obtain
\begin{equation}
    J(q^\star)-J_{\eta_{t^\star}}(q^\star) = \tilde{B} \left[ \frac{z^\star}{2\eta_{t^\star}}+\frac{\eta_{t^\star}}{2}-\sqrt{z^\star} \right].
\end{equation}

Let \(r \triangleq \eta_{t^\star}/\eta^\star \in [1,1 + \xi]\). Then, we get 
\begin{equation}
    \frac{z^\star}{2\eta_{t^\star}}+\frac{\eta_{t^\star}}{2}-\sqrt{z^\star} = \frac{\eta^\star}{2} \left(\frac1r+r-2\right) = \frac{\eta^\star}{2}\cdot \frac{(r-1)^2}{r}
    \le \frac{\eta^\star \xi^2}{2(1 + \xi)}
    \le \frac{m\xi^2}{2(1 + \xi)},
\end{equation}
where the last step uses \(\eta^\star \le m\). Hence
\begin{equation}
\label{eq:grid-error}
J_{\eta_{t^\star}}(q^\star) \ge J(q^\star)-\frac{\tilde{B} m\xi^2}{2(1 + \xi)}.
\end{equation}

Let \(q^\dagger_{\eta_{t^\star}}\) denote the candidate returned by the inner solver at grid point \(\eta_{t^\star}\). By the randomized greedy guarantee and the objective-improving post-processing step, we have
\begin{equation}
    \mathbb{E} \left[J_{\eta_{t^\star}} \left(q^\dagger_{\eta_{t^\star}}\right)\right] \ge \mathbb{E} \left[J_{\eta_{t^\star}} \left(q^{\rm rg}_{\eta_{t^\star}}\right)\right] \ge \rho \max_{q\in Q} J_{\eta_{t^\star}}(q).
\end{equation}

Since $q^\star$ is feasible, we have
\begin{equation}
    \mathbb{E} \left[J_{\eta_{t^\star}} \left(q^\dagger_{\eta_{t^\star}}\right)\right] \ge \rho J_{\eta_{t^\star}}(q^\star).    
\end{equation}

On the other hand, Algorithm \ref{alg:grid-submodular} returns $q^\dagger \in \arg\max_{\eta\in G_\xi} J(q^\dagger_\eta).$ Therefore, pointwise, we have
\begin{equation}
    J(q^\dagger) \ge J \left(q^\dagger_{\eta_{t^\star}}\right).
\end{equation}

Moreover, since \(J(q)=\max_{\eta>0}J_\eta(q)\), we have
\begin{equation}
    J \left(q^\dagger_{\eta_{t^\star}}\right) \ge J_{\eta_{t^\star}} \left(q^\dagger_{\eta_{t^\star}}\right).
\end{equation}

Combining the previous inequalities and taking expectations gives
\begin{equation}
    \mathbb{E}[J(q^\dagger)] \ge \rho J_{\eta_{t^\star}}(q^\star).
\end{equation}

Using \eqref{eq:grid-error}, we obtain
\begin{equation}
    \mathbb{E}[J(q^\dagger)] \ge \rho\left( J(q^\star) - \frac{\widetilde B m\xi^2}{2(1+\xi)} \right),
\end{equation}
which proves~\eqref{eq:end-to-end-main}.

\end{proof}

\section{Additional theoretical results}

\subsection{Beam-level reward hacking via deterministic cutoff} \label{appx:subsec:beam_hacking}
\begin{corollary}
\label{thm:beam_hacking_cutoff}
Under Assumption \ref{ass:noise} and any fixed $\lambda\in \R$, let $\Gamma_t(\gC_t) = \sup_{\tau_\star\in\gC_t^\star, \tau_-\in\gC_t^-} \{V(\tau_\star)+\beta_t(\tau_\star) -V(\tau_-)-\beta_t(\tau_-) \}$, $q_t(\lambda)= \Phi\left(\frac{\lambda}{\sigma_t}\right)^{N_t^\star}$, and $r_t(\lambda) = 1-\Phi\left( \frac{\Gamma_t (\gC_t)+\lambda}{\sigma_t} \right)$.
When $N_t^\star\ge 1$ and $N_t^- \ge w_t$, the probability that the beam of width $w_t$ contains no viable prefix satisfies
\begin{equation} \label{eq:beam_search_lowerbound}
\Pr\left( \mathcal B_t\cap \widetilde{\gP}_t^\star=\emptyset  \middle|  \gC_t \right)  
    \ge q_t(\lambda)   \Pr\left( \mathrm{Binomial}(N_t^-,r_t)\ge w_t \right),    
\end{equation}
where $\Pr\left( \mathrm{Binomial}(N_t^-,r_t)\ge w_t \right) \triangleq \sum_{j=w_t}^{N_t^-}{N_t^-\choose j} r_t(\lambda)^j \left(1- r_t(\lambda)\right)^{N_t^- - j}$.
\end{corollary}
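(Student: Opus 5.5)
The plan follows the cutoff argument of Theorem~\ref{thm:greedy_hacking_cutoff} essentially verbatim; the only change is in the non-viable outlier event. Fix $\lambda\in\R$ and set the deterministic cutoff $a_t=\sup_{\tau'_\star\in\gC_t^\star}\{V(\tau'_\star)+\beta_t(\tau'_\star)\}+\lambda$. I will lower-bound the failure event $\{\gB_t\cap\widetilde{\gP}_t^\star=\emptyset\}$ by the intersection
\begin{equation*}
E_1\cap E_2,\qquad E_1=\Bigl\{\max_{\tau_\star\in\gC_t^\star}\widehat V(\tau_\star)\le a_t\Bigr\},\qquad E_2=\Bigl\{\bigl|\{\tau_-\in\gC_t^-:\widehat V(\tau_-)>a_t\}\bigr|\ge w_t\Bigr\}.
\end{equation*}

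The first step is the set inclusion. On $E_1\cap E_2$, at least $w_t$ non-viable prefixes have scores strictly larger than every viable score. Since $\gB_t$ consists of the top-$w_t$ elements under $\widehat V$, every element of $\gB_t$ then has score exceeding $a_t$. To see this, suppose some $\tau\in\gB_t$ had $\widehat V(\tau)\le a_t$. Then at least $w_t$ strictly better candidates would exist, contradicting the requirement that $\tau$ be among the top $w_t$, for any tie-breaking rule. Every viable prefix scores at most $a_t$, so none of them lies in $\gB_t$. The hypotheses $N_t^\star\ge1$ and $N_t^-\ge w_t$ ensure that both events are nondegenerate.

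The second step uses independence. The noise variables $\{\varepsilon_t(\tau)\}_{\tau\in\gC_t}$ are conditionally i.i.d.\ given $\gC_t$ by Assumption~\ref{ass:noise}. Because $E_1$ depends only on viable noises and $E_2$ only on non-viable noises, $\Pr(E_1\cap E_2\mid\gC_t)=\Pr(E_1\mid\gC_t)\Pr(E_2\mid\gC_t)$. The bound $\Pr(E_1\mid\gC_t)\ge\Phi(\lambda/\sigma_t)^{N_t^\star}=q_t(\lambda)$ is copied from the proof of Theorem~\ref{thm:greedy_hacking_cutoff}, since $\varepsilon_t(\tau_\star)\le\lambda$ for all viable $\tau_\star$ implies $E_1$.

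The third step, and the only genuinely new one, handles $E_2$. As in the greedy proof, each non-viable $\tau_-$ satisfies $\widehat V(\tau_-)>a_t$ whenever $\varepsilon_t(\tau_-)>\Gamma_t(\gC_t)+\lambda$, because $a_t-\inf_{\tau'_-}\{V(\tau'_-)+\beta_t(\tau'_-)\}=\Gamma_t(\gC_t)+\lambda$. Define the indicators $I_{\tau_-}=\mathbf 1\{\varepsilon_t(\tau_-)>\Gamma_t(\gC_t)+\lambda\}$. These are i.i.d.\ Bernoulli with success probability exactly $r_t(\lambda)$. Their sum is therefore $\mathrm{Binomial}(N_t^-,r_t(\lambda))$, and it is dominated pointwise by the number of exceedances in $E_2$. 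Hence $\Pr(E_2\mid\gC_t)\ge\Pr(\mathrm{Binomial}(N_t^-,r_t(\lambda))\ge w_t)$.

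The main care point is to use this pointwise domination by i.i.d.\ threshold indicators rather than per-prefix probabilities, which are not identical across prefixes. Multiplying the two bounds gives \eqref{eq:beam_search_lowerbound}, and taking $w_t=1$ recovers \eqref{eq:greedy_bound}.
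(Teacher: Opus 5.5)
Your proof is correct and follows the paper's argument: the same cutoff $a_t$, the same intersection event (all viable scores below $a_t$ and at least $w_t$ non-viable exceedances), independence between viable and non-viable noises, and the same two marginal bounds. Your coupling through the i.i.d.\ indicators $\mathbf 1\{\varepsilon_t(\tau_-)>\Gamma_t(\gC_t)+\lambda\}$ and your tie-breaking check just spell out the paper's stochastic-dominance step and its claim that the top-$w_t$ beam can be filled by non-viable prefixes.
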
 
\begin{proof}
Let $M_t^\star = \max_{\tau_\star\in\gC_t^\star} \widehat V(\tau_\star)$ be the largest verifier score among viable prefixes. 
Also, for any fixed $\lambda \in \R$, set $a_t= \sup_{\tau'_\star \in \gC_t^\star}  \{ V_t(\tau'_\star) + b_t(\tau'_\star) \} +\lambda.$
Beam search fails at step $t$ if the top $w_t$ prefixes selected by $\widehat V$ are all non-viable. A sufficient condition for this failure is $M_t^\star\le a_t$ and $\left| \left\{ \tau_-\in\gC_t^-: \widehat V(\tau_-) > a_t \right\} \right| \ge w_t.$
Indeed, on this event, at least $w_t$ non-viable prefixes have verifier score at least as large as every viable prefix. Thus the top-$w_t$ beam can be filled entirely by non-viable prefixes.

By the proof of Theorem \ref{thm:greedy_hacking_cutoff}, we have 
\begin{equation}
\Pr(M_t^\star\le a_t\mid\gC_t)
    \ge \Phi\left(\frac{\lambda}{\sigma_t}\right)^{N_t^\star}
    = q_t(\lambda)
\end{equation}
and
\begin{equation}
\Pr(\widehat V(\tau_-) > a_t\mid\gC_t)
    \ge 1 - \Phi\left(\frac{\Gamma_t (\gC_t)+\lambda}{\sigma_t} \right)
= r_t(\lambda).
\end{equation}

Since the non-viable noises are independent by Assumption \ref{ass:noise}, the number of non-viable prefixes exceeding the cutoff stochastically dominates $\mathrm{Binomial}\left(N_t^-,r_t(\lambda)\right).$
Consequently, we get
\begin{equation}
    \Pr\left( \left| \left\{ \tau_-\in\gC_t^-: \widehat V(\tau_-)> a_t \right\} \right| \ge w_t  \middle|  \gC_t \right) \ge \Pr\left( \mathrm{Binomial}(N_t^-,r_t)\ge w_t \right).
\end{equation}

As in Theorem \ref{thm:greedy_hacking_cutoff}, the event $\{M_t^\star\le a_t\}$ and the event that at least $w_t$ non-viable prefixes exceed $a_t$ depends only on the non-viable noises being independent. Hence, we get our desired result as
\begin{align}
&\Pr\left( \mathcal B_t\cap\widetilde{\gP}_t^\star=\emptyset  \middle|  \gC_t \right) \\
    &\qquad\ge \Pr\left( M_t^\star\le a_t,  \left| \left\{ \tau_-\in\gC_t^-: \widehat V(\tau_-)> a_t \right\} \right| \ge w_t  \middle|  \gC_t \right) \\
    &\qquad= \Pr(M_t^\star\le a_t\mid\gC_t) \Pr\left( \left| \left\{ \tau_-\in\gC_t^-: \widehat V(\tau_-)> a_t \right\} \right| \ge w_t  \middle|  \gC_t \right) \\
    &\qquad\ge q_t(\lambda)   \Pr\left( \mathrm{Binomial}(N_t^-,r_t)\ge w_t \right).
\end{align}

\end{proof}

\subsection{Kernel penalty breaks narrow beam-search failures}
\label{appx:kernel_narrow_failure}

\begin{theorem}
\label{thm:clustered_failure_rescue}
Let us fix a depth \(t\) with \(C_t^\star\neq\emptyset\). Let
\(\widehat B_t^{\rm prm}\) be the top-\(w_t\) PRM beam, with \(w_t\ge 2\), and suppose it fails (i.e., $\widehat B_t^{\rm prm}\subseteq C_t^-$). 
For \(|S_t|=w_t\), we let
\begin{equation}
    J_t(S_t) = \sum_{\tau_i^{(t)}\in S_t}s_{t,i} - \widetilde B_t \sqrt{q_t(S_t)^\top K q_t(S_t)},
\end{equation}
where \(\widetilde B_t>0\), \(s_{t,i}=\widehat V(\tau_i^{(t)})\), and \(q_t(S_t)\in\{0,1\}^{|C_t|}\) is the indicator vector of \(S_t\). Let us assume \(K\) is symmetric, \(K_{ii}=1\), and \(0\le K_{ij}\le 1\).

Let
\begin{equation}
    \tau_{a_t}^{(t)} \in \arg\min_{\tau_i^{(t)}\in \widehat B_t^{\rm prm}} s_{t,i}, \qquad
        \tau_{v_t}^{(t)} \in \arg\max_{\tau_i^{(t)}\in C_t^\star} s_{t,i},
\end{equation}
and define the PRM edge gap
\begin{equation}
    \Delta_t^{\rm edge}=s_{t,a_t}-s_{t,v_t}.
\end{equation}

Suppose the failed PRM beam is more similar to its weakest selected prefix than to the best viable replacement; that is, $K_{a_t i}\ge \alpha_t$ and $K_{v_t i}\le \beta_t$ for all $\tau_i^{(t)}\in \widehat B_t^{\rm prm}\setminus\{\tau_{a_t}^{(t)}\}$ given \(\alpha_t>\beta_t\).
In this case, if 
\begin{equation}
    0\le \Delta_t^{\rm edge} < \widetilde B_t \frac{w_t-1}{w_t} (\alpha_t-\beta_t),
\end{equation}
then \(\widehat B_t^{\rm prm}\) is not a maximizer of \(J_t\).
\end{theorem}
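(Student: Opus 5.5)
Our plan is a single-swap exchange argument. We exhibit one feasible set of size $w_t$ with strictly larger $J_t$ than $\widehat B_t^{\rm prm}$. The natural candidate is
\[
S_t' = \bigl(\widehat B_t^{\rm prm}\setminus\{\tau_{a_t}^{(t)}\}\bigr)\cup\{\tau_{v_t}^{(t)}\}.
\]
Since the beam failed, $\tau_{v_t}^{(t)}\in C_t^\star$ is not in $\widehat B_t^{\rm prm}\subseteq C_t^-$, so $S_t'$ has exactly $w_t$ elements. We must show $J_t(S_t')>J_t(\widehat B_t^{\rm prm})$.

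Write $R=\widehat B_t^{\rm prm}\setminus\{\tau_{a_t}^{(t)}\}$, with $|R|=w_t-1\ge 1$, and set $Q_R=\sum_{i,j\in R}K_{ij}$. Because $K_{ii}=1$ and $K$ is symmetric, the two quadratic forms are
\[
Q_{\rm old}=Q_R+1+2\sum_{i\in R}K_{a_t i}, \qquad Q_{\rm new}=Q_R+1+2\sum_{i\in R}K_{v_t i}.
\]
The similarity hypotheses then give
\[
Q_{\rm old}-Q_{\rm new}\ge 2(w_t-1)(\alpha_t-\beta_t)>0.
\]
On the score side, the nominal term drops by exactly $\Delta_t^{\rm edge}\ge 0$. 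It therefore suffices to show
\[
\widetilde B_t\bigl(\sqrt{Q_{\rm old}}-\sqrt{Q_{\rm new}}\bigr)>\Delta_t^{\rm edge}.
\]

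To convert the quadratic-form gap into a square-root gap we use
\[
\sqrt{Q_{\rm old}}-\sqrt{Q_{\rm new}}=\frac{Q_{\rm old}-Q_{\rm new}}{\sqrt{Q_{\rm old}}+\sqrt{Q_{\rm new}}}.
\]
Since $0\le K_{ij}\le 1$, each form is at most $w_t^2$, so each square root is at most $w_t$ and the denominator is at most $2w_t$. This yields
\[
\sqrt{Q_{\rm old}}-\sqrt{Q_{\rm new}}\ge \frac{2(w_t-1)(\alpha_t-\beta_t)}{2w_t}=\frac{w_t-1}{w_t}(\alpha_t-\beta_t).
\]
Combined with the hypothesis $\Delta_t^{\rm edge}<\widetilde B_t\frac{w_t-1}{w_t}(\alpha_t-\beta_t)$, we get $J_t(S_t')-J_t(\widehat B_t^{\rm prm})>0$. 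Hence the failed PRM beam is not a maximizer of $J_t$.

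The main obstacle is this square-root linearization step. The hypothesis constant $\frac{w_t-1}{w_t}$ forces the crude uniform bound $\sqrt{Q}\le w_t$ on both forms, so we must check that this bound suffices and that no sharper, instance-dependent estimate is needed. We must also confirm that $\alpha_t>\beta_t$ together with $w_t\ge 2$ makes the improvement strictly positive even when $\Delta_t^{\rm edge}=0$.
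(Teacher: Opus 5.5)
Your proposal is correct and follows the paper's proof essentially step for step. It uses the same swap of $\tau_{a_t}^{(t)}$ for $\tau_{v_t}^{(t)}$, the same identity that the quadratic-form drop equals $2\sum_{i}(K_{a_t i}-K_{v_t i})\ge 2(w_t-1)(\alpha_t-\beta_t)$, and the same uniform bound $\sqrt{q^\top K q}\le w_t$ to convert that drop into the $\frac{w_t-1}{w_t}(\alpha_t-\beta_t)$ square-root gap.
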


\begin{proof}
Since \(\widehat B_t^{\rm prm}\) is the top-\(w_t\) PRM beam and
\(\tau_{v_t}^{(t)}\notin \widehat B_t^{\rm prm}\), the weakest selected score
is at least \(s_{t,v_t}\). Hence we have $\Delta_t^{\rm edge}=s_{t,a_t}-s_{t,v_t}\ge 0 .$

Let
\begin{equation}
    S_t=\widehat B_t^{\rm prm},
    \qquad
    S_t'
    =
    \left(
    \widehat B_t^{\rm prm}
    \setminus\{\tau_{a_t}^{(t)}\}
    \right)
    \cup
    \{\tau_{v_t}^{(t)}\},
\end{equation}
and we denote
\begin{equation}
    R_t(S_t)=\sqrt{q_t(S_t)^\top Kq_t(S_t)} .
\end{equation}
Then, we get
\begin{equation}
    J_t(S_t')-J_t(S_t)
    =
    -\Delta_t^{\rm edge}
    +
    \widetilde B_t
    \left(R_t(S_t)-R_t(S_t')\right).
\end{equation}

It remains to lower-bound the reduction in kernel penalty. Let us define
\begin{equation}
    G_t
    =
    \sum_{\tau_i^{(t)}\in S_t\setminus\{\tau_{a_t}^{(t)}\}}
    \left(K_{a_t i}-K_{v_t i}\right).
\end{equation}
By the assumption about \(\alpha_t,\beta_t\), we have
\begin{equation}
    G_t
    \ge
    (w_t-1)(\alpha_t-\beta_t).
\end{equation}
The swap only changes kernel terms involving \(\tau_{a_t}^{(t)}\) or
\(\tau_{v_t}^{(t)}\). Since \(K\) is symmetric and
\(K_{a_ta_t}=K_{v_tv_t}=1\),
\begin{equation}
    R_t(S_t)^2-R_t(S_t')^2=2G_t .
\end{equation}
Also, since \(0\le K_{ij}\le 1\), every set \(T_t\) of size \(w_t\)
satisfies \(R_t(T_t)\le w_t\). Therefore, we have 
\begin{equation}
    R_t(S_t)-R_t(S_t') = \frac{R_t(S_t)^2-R_t(S_t')^2} {R_t(S_t)+R_t(S_t')} = \frac{2G_t}{R_t(S_t)+R_t(S_t')} \ge \frac{G_t}{w_t} \ge \frac{w_t-1}{w_t}(\alpha_t-\beta_t).
\end{equation}
Thus, it holds that 
\begin{equation}
    J_t(S_t')-J_t(S_t) \ge -\Delta_t^{\rm edge} + \widetilde B_t \frac{w_t-1}{w_t} (\alpha_t-\beta_t).
\end{equation}
Therefore, if 
\begin{equation}
    \Delta_t^{\rm edge}
    <
    \widetilde B_t
    \frac{w_t-1}{w_t}
    (\alpha_t-\beta_t),
\end{equation}
then \(J_t(S_t')>J_t(S_t)\). Hence
\(\widehat B_t^{\rm prm}\) cannot be a maximizer of \(J_t\).
\end{proof}

\subsection{Shifted nonnegative surrogates} \label{appx_subsec:corollary}
\begin{corollary}[End-to-end guarantee with shifted nonnegative surrogates]
\label{cor:shifted-end-to-end}
Assume the same conditions as in Theorem \ref{thm:end-to-end}, without the non-negativity of $J_\eta(q)$. 
For each grid point \(\eta \in G_\xi\), let $Q_{\le m}\triangleq \{q\in\{0,1\}^N:\mathbf 1^\top q\le m\}$ and $L_\eta \triangleq \min_{q\in Q_{\le m}} J_\eta(q).$

Let \(q^\dagger\) be the final output selected by Algorithm~\ref{alg:grid-submodular} according to the original objective \(J\). 
Let $q^\star\in\arg\max_{q\in\mathcal Q}J(q)$ and define $L_{\mathcal H} \triangleq \min_{\eta\in\gG_\xi}L_\eta.$  
Then, it holds that 
\[
    \mathbb E[J(q^\dagger)]
    \ge
    \rho
    \left(
        J(q^\star)
        -
        \frac{\tilde B m\xi^2}{2(1+\xi)}
    \right)
    +
    (1-\rho)L_{\mathcal H}.
\]
\end{corollary}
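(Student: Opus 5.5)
The plan is to reduce to the nonnegative case of Theorem~\ref{thm:end-to-end} by shifting each surrogate. For each $\eta\in\gG_\xi$, define $\widetilde J_\eta(q)\triangleq J_\eta(q)-L_\eta$ on $Q_{\le m}$. Subtracting a constant preserves submodularity, and by the definition of $L_\eta$ the shifted function is nonnegative on $Q_{\le m}$. This nonnegativity on all subsets of size at most $m$, not just on $\gQ$, is exactly what the randomized greedy of \citet{buchbinder2014submodular} requires, since it evaluates intermediate sets $q_r$ with $\mathbf 1^\top q_r<m$. I will interpret the assumed inner-solver guarantee as applying to this shifted objective, i.e.
\[
\E\big[J_\eta(q_\eta^{\rm rg})\big]-L_\eta\ \ge\ \rho\left(\max_{q\in\gQ}J_\eta(q)-L_\eta\right).
\]
Rearranging gives $\E[J_\eta(q_\eta^{\rm rg})]\ge\rho\max_{q\in\gQ}J_\eta(q)+(1-\rho)L_\eta$. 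The pairwise-swap stage only increases $J_\eta$, so the same bound holds for $q_\eta^\dagger$.

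Next I repeat the chain from the proof of Theorem~\ref{thm:end-to-end} at the grid point $\eta_{t^\star}$ that brackets $\eta^\star=\sqrt{{q^\star}^\top Kq^\star}$. First, $J(q^\dagger)\ge J(q^\dagger_{\eta_{t^\star}})\ge J_{\eta_{t^\star}}(q^\dagger_{\eta_{t^\star}})$ holds pointwise. Taking expectations and using feasibility of $q^\star$, this gives $\E[J(q^\dagger)]\ge\rho J_{\eta_{t^\star}}(q^\star)+(1-\rho)L_{\eta_{t^\star}}$. The grid-error estimate \eqref{eq:grid-error} does not use nonnegativity, so it transfers unchanged and gives $J_{\eta_{t^\star}}(q^\star)\ge J(q^\star)-\tilde B m\xi^2/(2(1+\xi))$.

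Finally, $\rho\le1$ gives $1-\rho\ge0$, and $L_{\eta_{t^\star}}\ge L_{\mathcal H}$ by definition of $L_{\mathcal H}$ as a minimum over the grid. So I can replace $L_{\eta_{t^\star}}$ by $L_{\mathcal H}$, which yields the stated bound. Since $\eta_{t^\star}$ depends on the unknown $q^\star$, the minimum over the grid is what makes the constant computable.

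The main obstacle is the first step: stating precisely how the hypothesis ``$\E[J_\eta(q^{\rm rg}_\eta)]\ge\rho\max J_\eta$'' is to be read without nonnegativity. I will make explicit that it is applied to the shifted objective $J_\eta-L_\eta$, with the minimum taken over $Q_{\le m}$ so that every intermediate set the greedy visits is covered. After that, the remaining steps are bookkeeping with the $(1-\rho)$ offset.
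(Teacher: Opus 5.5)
Your proposal is correct and follows the paper's proof essentially step for step. You shift each surrogate by $L_\eta$, taking the minimum over $Q_{\le m}$ so that the intermediate greedy sets are covered, and apply the multiplicative guarantee to the shifted objective to get the $(1-\rho)L_\eta$ offset. You then rerun the grid-bracketing chain of Theorem~\ref{thm:end-to-end}, using $1-\rho\ge 0$ and $L_{\eta}\ge L_{\mathcal H}$, exactly as the paper does.

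One point is worth stating explicitly, as the paper does after its proof: subtracting a constant leaves both marginal gains and swap gains unchanged. Hence the randomized greedy and the local search produce the same output on $J_\eta$ and on $J_\eta-L_\eta$, and this is what justifies applying the shifted guarantee to the actual $q_\eta^{\rm rg}$.
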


\begin{proof}
For any fixed grid point \(\eta\), the shifted objective $\bar J_\eta(q)=J_\eta(q)-L_\eta$ is nonnegative on \(Q_{\le m}\), and hence on every intermediate set considered by the randomized greedy procedure.

By the assumed
multiplicative guarantee in Theorem \ref{thm:end-to-end}, we have
\begin{equation}
    \mathbb E[\bar J_\eta(q_\eta^{\mathrm{rg}})] \ge \rho \max_{q\in\mathcal Q}\bar J_\eta(q) = \rho\left(\max_{q\in\mathcal Q}J_\eta(q)-L_\eta\right).
\end{equation}

Since \(\bar J_\eta(q)=J_\eta(q)-L_\eta\), this implies $\mathbb E[J_\eta(q_\eta^{\mathrm{rg}})] \ge \rho \max_{q\in\mathcal Q}J_\eta(q) + (1-\rho)L_\eta.$
The local-search post-processing only improves \(J_\eta\), so
\begin{equation}
    \mathbb E[J_\eta(q_\eta^\dagger)] \ge \rho \max_{q\in\mathcal Q}J_\eta(q) + (1-\rho)L_\eta.
\end{equation}

Let \(q^\star\in\arg\max_{q\in\mathcal Q}J(q)\) and $\alpha^\star = \sqrt{{q^\star}^\top Kq^\star}$, and then consider \(\eta^\star_{\xi}\in\gG_\xi\) such that
\begin{equation}
    \alpha^\star \le \eta^\star_{\xi} \le (1+\xi)\alpha^\star.
\end{equation}
Then, by using the same technique for bounding the multiplicative grid as in Theorem \ref{thm:end-to-end}, we have
\begin{equation}
    J(q^\star)-J_{\eta^\star_{\xi}}(q^\star)
    =
    \tilde B
    \left(
        \frac{(\eta^\star_{\xi}-\alpha^\star)^2}{2\eta^\star_{\xi}}
    \right)
    \le
    \frac{\tilde B \xi^2}{2(1+\xi)}\alpha^\star
    \le
    \frac{\tilde B m\xi^2}{2(1+\xi)}.
\end{equation}

Therefore,
\begin{equation}
    \max_{q\in\mathcal Q}J_{\eta^\star_{\xi}}(q) \ge J_{\eta^\star_{\xi}}(q^\star) \ge J(q^\star) - \frac{\tilde B m\xi^2}{2(1+\xi)}.
\end{equation}

Applying the shifted guarantee at \(\eta^\star_{\xi}\), and using
\(L_{\eta^\star_{\xi}}\ge L_{\mathcal H}\), gives
\begin{equation}
    \mathbb E[J_{\eta^\star_{\xi}}(q_{\eta^\star_{\xi}}^\dagger)] \ge \rho \left( J(q^\star) - \frac{\tilde B m\xi^2}{2(1+\xi)} \right) + (1-\rho)L_{\mathcal H}.
\end{equation}
Finally, Algorithm~\ref{alg:grid-submodular} returns the candidate with the largest original
objective \(J\) among all grid solutions, and \(J(q)\ge J_\eta(q)\) for every \(q\) and \(\eta\).
Thus,
\begin{equation}
    J(q^\dagger) \ge J(q_{\eta^\star_{\xi}}^\dagger) \ge J_{\eta^\star_{\xi}} (q_{\eta^\star_{\xi}}^\dagger),
\end{equation}
and the claimed bound follows after taking expectations.
\end{proof}

Corollary~\ref{cor:shifted-end-to-end} shows why the nonnegativity shift cannot be treated as
without loss for a multiplicative approximation guarantee. Shifting preserves the optimizer and
the marginal gains, but it changes the scale on which the multiplicative approximation is measured.
The resulting guarantee therefore contains the additional additive term
\((1-\rho)L_{\mathcal H}\), which disappears only when the unshifted surrogates are already
nonnegative.
The shift is used only for the approximation analysis. The implementation does not require
computing \(L_\eta\), since subtracting a constant from \(J_\eta\) does not change marginal gains,
local swaps, or the selected set.

\section{Comparison with MBR and diversity-based decoding} \label{app:diversity_comparison}

Maximin search belongs to a broader family of inference-time methods that use the geometry of a candidate set to avoid depending too strongly to a single imperfect score. 
The key distinction is what decision geometry is used to regularize. 
Minimum Bayes Risk (MBR) uses geometry to choose a representative final output. Diversity-promoting methods use geometry to reduce redundancy among candidates. Maximin search uses geometry to make intermediate pruning robust to verifier misspecification.

\paragraph{Maximin vs MBR}
We first contrast maximin search with MBR decoding. Given completed candidates $\gY=\{y_i\}_{i=1}^N$, sample-based MBR selects
\begin{equation}
    \hat y_{\mathrm{MBR}}
    \in
    \arg\max_{y\in\gY}
    \frac{1}{N}\sum_{i=1}^N U(y,y_i).
\end{equation}
When the utility is the negative of a metric distance, this is equivalent to minimizing empirical expected distance, so MBR selects a Fr\'echet median of the empirical candidate distribution
$\hat p_N=\frac{1}{N}\sum_{i=1}^N\delta_{y_i}$, i.e., an empirical $1$-median \citep{jinnai2025regularized}. This view also underlies MBR-regularized Best-of-N decoding, where an MBR term discourages selecting high-reward outputs that are far from the reference-policy sample distribution \citep{jinnai2025regularized}.

Thus, MBR and maximin search share a common motivation: both use candidate-set geometry to reduce the risk of over-optimizing a noisy or misspecified reward. However, they use geometry in different ways. MBR is a consensus rule over completed outputs. It favors candidates that are central under the empirical output distribution. This is desirable when centrality correlates with correctness, because isolated high-reward outliers are suppressed. In mathematical reasoning, however, centrality can also be misleading: many sampled chains may share the same plausible but incorrect misconception, while a correct continuation may be rare. A purely central candidate can therefore correspond to a dominant wrong reasoning mode.

Maximin search addresses a different problem. It does not choose a single central completed output. Instead, it selects a subset of partial reasoning prefixes to preserve for future expansion. The goal is to avoid concentrating the search frontier in one high-similarity region when the verifier may be wrong in a coherent way on that region. In this sense, maximin search regularizes the pruning step rather than the final selection step. It preserves optionality under verifier uncertainty, whereas MBR aggregates evidence to choose a representative final candidate.

\paragraph{Maximin vs diversity-based decoding}
Maximin search is also distinct from generic diversity regularization. Diverse beam search introduces diversity as a search heuristic, often at the lexical or semantic level, to reduce redundancy among beams \citep{vijayakumar2016diverse}. Determinantal point process (DPP) methods similarly select high-quality diverse subsets by optimizing a determinantal objective, typically rewarding large volume or near-orthogonality in the feature space \citep{kulesza2012determinantal,meister2021determinantal}. These methods are natural baselines because they also discourage redundant candidates. However, their diversity terms are not derived from a model of verifier error.

The maximin penalty has a different interpretation. Under our RKHS misspecification model, the term $\sqrt{q^\top K q}$ is proportional to the worst-case coherent verifier error over the selected trajectories. The regularization coefficient $\lambda$ therefore plays the role of an uncertainty radius, not merely a diversity weight. This interpretation yields a principled way to choose $\lambda$ from an estimated level of verifier misspecification, rather than treating it as an arbitrary diversity hyperparameter. 
The resulting subset-selection problem is
\begin{equation}
    \max_{q\in\{0,1\}^N,\ \mathbf 1^\top q=m}
    s^\top q-\lambda\sqrt{q^\top Kq},
\end{equation}
where $s$ contains the verifier scores and $K$ measures similarity between prefixes. The penalty controls the RKHS norm of the selected empirical measure, which is exactly the quantity that bounds worst-case aggregate misspecification error on the retained trajectories.

Hence the selected set under maximin search is not merely diverse or representative, unlike MBR, diverse beam search, or DDP; it is the highest-scoring trajectories whose aggregate score remains stable under structured step-level reward misspecification.

\section{Experimental details} \label{appx:sec:details}

Given a set of trajectories \(\{\tau_i\}_{i=1}^n\), we use the last hidden state of the PRM as an embedding \(e_i=\mathrm{emb}(\tau_i)\in\mathbb{R}^d\). 
Then, the kernel is defined as $d^2(\tau_i,\tau_j) = 2-2\frac{e_i^\top e_j}{\|e_i\|_2\|e_j\|_2}$, with the bandwidth determined by the median pairwise distance excluding diagonal self-distances:
\begin{equation}
    \gamma = 1/(2\sigma^2), \quad \sigma=\sqrt{\mathrm{median}\{d^2(\tau_i,\tau_j):i\neq j\}}.
\end{equation}
We clamp the median distance and \(\sigma^2\) by a constant \(10^{-6}\) for numerical stability. 


For both generators, we use top-$p$ sampling with $p=0.9$ and temperature $0.7$.
Also, for a step-level generation, we use a stop token of ``\texttt{\textbackslash n\textbackslash n}'' with the maximum number of tokens in each step being 128. 
We set the maximum reasoning depth as 30. To ensure a fair comparison across methods, we halt any trajectory once its total generated length exceeds 2,048 tokens.
Finally, for PRM-guided search methods, we use a constant branching factor of 4 at every reasoning depth.
Thus, under $4m$ number of maximum trajectories, each search state maintains $m$ parent trajectories and expands each of them with four continuations.

We use zero-shot chain-of-thought prompts for the generators, with an output format instruction. Exact prompts are given below.

\begin{prompt}[Prompt for Qwen2.5-Math-1.5B]
\begin{small}    
\begin{Verbatim}[breaklines=True]
Below is an instruction that describes a task. Write a response that appropriately completes the request.

### Instruction:
{question}

### Response: Please reason step by step, and put your final answer within \\boxed{}.

\end{Verbatim}
\end{small}
\end{prompt}

\;

\begin{prompt}[Prompt for Phi-3.5-mini-instruct]
\begin{small}    
\begin{Verbatim}[breaklines=True]
<|system|>You are a helpful assistant<|end|>
<|user|>Below is an instruction that describes a task. Write a response that appropriately completes the request.

### Instruction:
{question}

### Response: Please reason step by step, and put your final answer within \\boxed{}. 

<|end|>
<|assistant|>
\end{Verbatim}
\end{small}
\end{prompt}

We used the following constants for Algorithm~\ref{alg:grid-submodular}. 
The outer multiplicative grid is set based on \(\xi=0.05\), with endpoints \(\eta_{\min}=\sqrt{m}\) and \(\eta_{\max}=m\). 
For each fixed \(\eta\), we initialized the inner solver using the randomized greedy warm start as described in \S \ref{sec:method}, together with deterministic greedy and random initializations. 
Each initialization was refined by \(1\)-swap local search with at most \(30\) improvement passes and improvement tolerance \(10^{-12}\). 
Among the resulting local optima for a fixed \(\eta\), we kept the one with the largest fixed-\(\eta\) surrogate value.

In Table \ref{tab:qwen25_phi_results}, we list the average accuracy for AIME datasets. Next, we give the standard deviations of the AIME results in Table \ref{tab:aime_std_results}.

\begin{table}[h!]
\centering
\caption{
\textbf{
Standard deviations of benchmark results on AIME datasets.}
We report the standard deviations corresponding to the AIME results in Table~\ref{tab:qwen25_phi_results}.}
\label{tab:aime_std_results}
\small
\setlength{\tabcolsep}{4pt}
\begin{tabular}{llcccc}
\toprule
\multirow{2}{*}{Generator} & \multirow{2}{*}{Method}
& \multicolumn{2}{c}{AIME'24}
& \multicolumn{2}{c}{AIME'25} \\
\cmidrule(lr){3-4}
\cmidrule(lr){5-6}
& & $N=16$ & $N=64$ & $N=16$ & $N=64$ \\
\midrule
\multirow{4}{*}{Qwen2.5-Math-1.5B}
& BoN
& 3.34 & 3.34
& 3.85 & 5.09 \\

& BoN-MBR
& 1.93 & 3.34
& 3.85 & 3.85 \\

& SBS
& 1.92 & 5.09
& 3.34 & 3.85 \\

& Maximin
& 3.34 & 3.34
& 3.85 & 1.92 \\

\midrule
\multirow{4}{*}{Phi-3.5-mini-instruct}
& BoN
& 3.34 & 1.93
& 1.92 & 3.34 \\

& BoN-MBR
& 3.34 & 3.34
& 1.92 & 3.34 \\

& SBS
& 1.93 & 1.92
& 1.92 & 3.34 \\

& Maximin
& 1.92 & 3.85
& 1.92 & 1.92 \\
\bottomrule
\end{tabular}
\end{table}

\end{document}